\definecolor{customGreen}{rgb}{0.0, 0.9, 0.0}  
\definecolor{customMagenta}{rgb}{0.9, 0.0, 0.9}  
\definecolor{customBlue}{rgb}{0.0, 0.6, 0.95}
\definecolor{customRed}{rgb}{0.8, 0.12, 0.0}
\newtheoremstyle{withname}
  {}{}            
  {\itshape}      
  {}              
  {\bfseries}     
  {.}             
  {.5em}          
  {\thmname{#1}\thmnumber{ #2}\thmnote{ (#3)}} 
\theoremstyle{withname}
\newtheorem{proposition}{Proposition}
\definecolor{cvprblue}{rgb}{0.34,0.4,0.85} 
\newcolumntype{Y}{>{\centering\arraybackslash}X} 
\newlength{\tileh}\setlength{\tileh}{2.45cm}   
\newlength{\labw}\setlength{\labw}{8mm}        
\newcommand{\dataset}{VT-Intrinsic\ }
\newcommand{\optmark}{%
  \tikz[baseline=-0.6ex]\node[draw=blue!50,fill=blue!50,circle,
  minimum size=1.2ex,inner sep=0pt]{}; }
\newcommand{\learnmark}{%
  \tikz[baseline=-0.6ex]\node[draw=magenta!70,fill=magenta!70,diamond,
  minimum size=1.2ex,inner sep=0pt]{}; }
\newcommand{\physmark}{%
  \tikz[baseline=-0.6ex]\node[draw=teal!70,fill=teal!70,star,star points=5,
  star point ratio=2.3,minimum size=1.4ex,inner sep=0pt]{}; }
\newcommand{\ycircle}{%
  \tikz[baseline=-0.6ex, every node/.style={inner sep=0, outer sep=0}]{
    \draw[draw={yellow!85!orange}, line width=1.8pt] (0,0) circle (0.8ex);
  }%
}
\definecolor{cvprblue}{rgb}{0.21,0.49,0.74}
\title{
VT-Intrinsic: Physics-Based Decomposition of Reflectance and Shading\\ 
using a Single Visible-Thermal Image Pair
\vspace{-0.1in}
}
\author{Zeqing Yuan\quad 
Mani Ramanagopal\quad 
Aswin C. Sankaranarayanan\quad
Srinivasa G. Narasimhan\\
{Carnegie Mellon University}\\
{\tt\small\href{https://vt-intrinsic.github.io}{https://vt-intrinsic.github.io}}
\vspace{-0.1in}
}
\begin{document}

\maketitle

\begin{strip}
    \includegraphics[width=\linewidth]{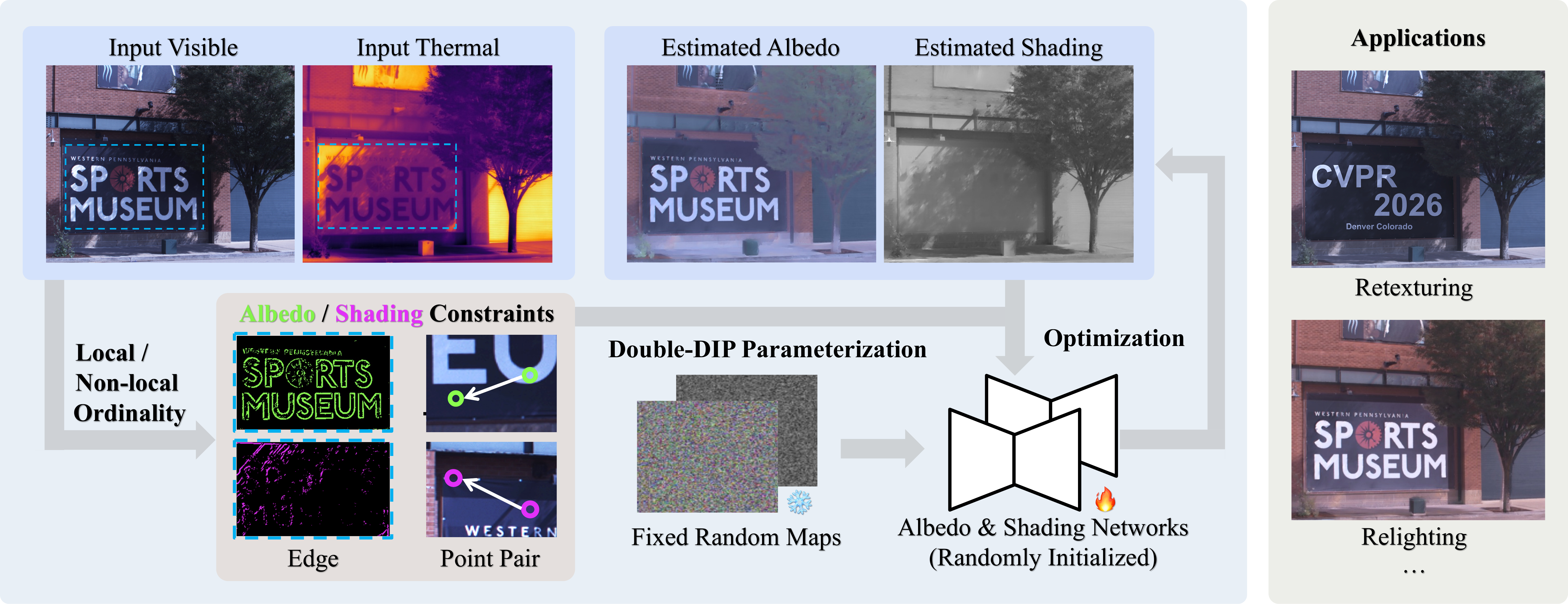}
    \vspace{-0.2in}
    \captionof{figure}{
        Through the tree's veil, sunlight weaves intricate shadows across a building façade.
        Visible and thermal images capture complementary cues of the reflected and absorbed light. 
        Local and non-local visible–thermal ordinalities (\autoref{sec:theory}) reveal albedo-/shading-dominant edges and point-pair ordinalities respectively, guiding an optimization using Double-DIP parameterization (\autoref{sec:method}).
        Our physics-based method reconstructs the complex shading and albedo without learned priors, whereas state-of-the-art models fail (see supplementary). 
        The decomposition enables faithful retexturing and relighting via albedo and shading editing. 
    }
    \label{fig:teaser_main}
\end{strip}

\begin{center}
{\large \textbf{Abstract}}
\end{center}

Decomposing a scene into its reflectance and shading is a challenge due to the lack of extensive ground-truth data for real-world scenes. We introduce a novel physics-based approach for intrinsic image decomposition using a pair of visible and thermal images. We leverage the principle that light not reflected from an opaque surface is absorbed and detected as heat by a thermal camera. This allows us to relate the ordinalities (or relative magnitudes) between visible and thermal image intensities to the ordinalities of shading and reflectance. The ordinalities enable dense self-supervision of an optimizing neural network to recover shading and reflectance. We perform quantitative evaluations with known reflectance and shading under natural and artificial lighting, and qualitative experiments across diverse scenes. The results demonstrate superior performance over both physics-based and recent learning-based methods, providing a path toward scalable real-world data curation with supervision. 

\section{Introduction}
\label{sec:intro}


Understanding how a scene appears from the interaction between \textit{surface reflectance} (the intrinsic material color, often approximated as diffuse \textit{albedo}) and \textit{incident illumination} (\textit{shading} determined by lighting and geometry) has long been a central pursuit in vision and imaging sciences~\cite{barrow1978IID}.
Disentangling these physical factors is useful for various applications in  graphics (recoloring, relighting, and compositing) and vision (object recognition and tracking).
Recent learning-based methods have made progress by formulating this task in an end-to-end framework and inferring statistical priors from auxiliary datasets to constrain the otherwise ill-posed inverse problem~\cite{garces2022survey}.
%
However, collecting ground-truth data for real-world scenes remains infeasible, as measuring surface reflectance and shading requires specialized equipment and controlled procedures~\cite{MITIntrinsic,wu2023maw}.

In this paper, we introduce a novel physics-based framework that leverages a single auxiliary thermal image to decompose a visible image of a scene into its albedo and shading components. 
To see why a thermal image is useful here, we consider the underlying physical principles that govern albedo and shading. 
Shading corresponds to the total incident energy (or irradiance) at a scene point, while albedo represents the proportion of that energy reflected by the surface. 
%
For opaque objects, the unreflected portion of the incident energy is absorbed, which contributes to the thermal radiation. This radiation can be detected by a thermal camera in the long-wave infrared range (8–14 $\mu\mathrm{m}$).
%
%
A recent technique called JoLHT-Video~\cite{joint_light_and_heat_transport} addressed this issue by modeling heat transport equations and estimating it from the transient heat flow observed by a thermal \textit{video}. 
However, directly estimating the absorbed light is challenging as it requires controlled and visible-only lighting and transient thermal measurements from video,
Inspired by this work, we pose the following question: {\it What can be achieved using only a single thermal image?}

Since absorption of light increases the temperature of an object, low-albedo regions—dark in the visible image—appear bright in the thermal image, whereas shading variations appear bright in both.
%
Based on this observation, we relate visible–thermal intensity ordinalities between any two scene points to their albedo and shading ordinalities, {\it without} having to estimate the absorbed light. 
%
Specifically, the ordinality of neighboring scene points classifies edges as shading- or reflectance-dominant and defines an edge loss, while non-local ordinalities yield a point-pair loss.
These new losses are used, together with the standard visible-image reconstruction loss, to optimize a neural network (e.g., Double Deep Image Prior~\cite{gandelsman2018doubledip}, randomly initialized to parameterize albedo and shading), effectively providing dense self-supervision for intrinsic decomposition.
%

Our ordinality theory is derived using the Lambertian assumption and when illumination is confined to the visible spectrum (e.g., LED lighting).
%
We further extend it to broadband sources containing infrared energy (e.g., sunlight, incandescent bulbs) by empirically observing—and statistically validating with~\cite{kokaly2017usgs}—that infrared albedo exhibits lower variation across common materials than visible albedo~\cite{ChoeG}, thereby preserving ordinalities. 
%
Expert validation on diverse materials and natural scenes—including those moderately violating the Lambertian assumption—shows near-perfect agreement between our automatically estimated point-pair ordinalities and confident expert labels, confirming robustness across material types and generalization beyond idealized conditions.
%
%

We quantitatively evaluate our method on scenes with known reflectance (e.g. color charts) and known shading (e.g. object imaged under identical lighting but painted differently). We further test on visible-thermal pairs simulated from the  MIT Intrinsic dataset~\cite{MITIntrinsic}. Finally, we demonstrate qualitative results on complex indoor and outdoor scenes with notable improvements over both physics-based and learning-based methods trained on auxiliary datasets. Our dataset of visible-thermal image pairs can also provide supervision for learning methods in real-world scenes. 

\textit{Limitations}: 
Our method assumes dominant diffuse reflection, heat arising primarily from light absorption, and the absence of multiple colored illuminations. 
Performance can also be affected by the limited SNR and resolution of inexpensive microbolometer thermal cameras, particularly under weak illumination or in dynamic scenes. 
Despite these limitations, as thermal cameras improve and become more ubiquitous, understanding the interplay between light and heat holds strong potential for vision and graphics.

\section{Related Work}
\label{sec:rw}
\subsection{Thermal Imaging for Physics-Based Vision}
Thermal cameras have recently emerged as a powerful complement to visible sensing across geometry~\cite{narayanan2024shape,thr_photometric_stereo,shape_radiation_2022_CVPR}, materials~\cite{thr4material_2023_CVPR}, and appearance~\cite{bao_heat-assisted_2023}. 

JoLHT-Video~\cite{joint_light_and_heat_transport} showed that transient thermal video provides analytical cues for estimating absorbed and reflected light. However, it relies on (i) active lighting control, infeasible in-the-wild, (ii) visible-only illumination that excludes common  sources (sun and incandescent bulbs), and (iii) radiometric calibration between visible and thermal cameras.
In contrast, our method uses a \emph{single} steady-state thermal image to extract reliable ordinal constraints without requiring video, calibration or lighting control.

\subsection{Intrinsic Image Decomposition (IID)}

\textbf{Early optimization-based approaches:} Retinex-style~\cite{land1977retinex} methods rely on stringent assumptions that hinder generalization—such as smooth shading or reflectance~\cite{barrow1978recovering}, chromaticity-preserving shading variations~\cite{gevers2003reflectance,finlayson2004color,das2022intrinsic}, or local intensity similarity implying shared reflectance~\cite{IID_optimize_cvpr2011}.


\noindent\textbf{Learning-based approaches:}
Unsupervised learning based methods that decorrelate albedo and shading ~\cite{liu2020unsupervised} or that enforce albedo consistency across changing illumination ~\cite{li2018learning} improve upon hand-crafted priors. 
Supervised learning-based models are primarily trained on synthetic datasets~\cite{li2018cgi,Li_2021_openrooms,Roberts_2021_hyperism,gta}, which provide ground-truth albedo and shading but face a significant sim-to-real gap. 
Existing real-world datasets~\cite{bell14IIW,kovacs17SAW,wu2023maw} offer sparse annotations used by models (e.g., to predict albedo ordinalities~\cite{learning_ordinal_iccv2015}), but are limited to small-scale indoor scenes. 
Intrinsic-v1~\cite{careagaIntrinsic-v1} expands to more diverse data by using model predictions as pseudo-ground truth, albeit imperfectly.
Recent works~\cite{kocsis2023intrinsic,zeng2024rgbx,intrinsicDiffusion} leverage diffusion priors for IID, yet as noted in~\cite{careagaIntrinsic-v2}, they suffer from hallucination.

\noindent\textbf{Using auxiliary sensors:}
\citet{cheng2019non} used near-infrared (NIR) images as shading proxies, but NIR albedo often varies across materials (albeit less than visible) and modern efficient lighting such as LEDs hardly emits NIR, limiting generality and applicability.
\citet{sato2023unsupervised} used intensity of sparse LiDAR returns and enforce consistency with estimated albedos, yet LIDAR operates in NIR where albedo differs from visible~\cite{cheng2019non}.
While such NIR cues help in specific cases, our approach exploits the complementary relation between visible (reflected light) and thermal (proxy for absorbed light), enabling broader applicability.
 

\begin{figure}[ttt]
  \captionsetup{position=top, skip=3pt}
  \centering  
  \setlength{\tabcolsep}{1pt} 
  \begin{tabularx}{\linewidth}{@{}YYYYYY@{}}
    Visible & Thermal & Edges & Points & Shading & Albedo \\
    \includegraphics[width=\linewidth,height=1.4\linewidth]{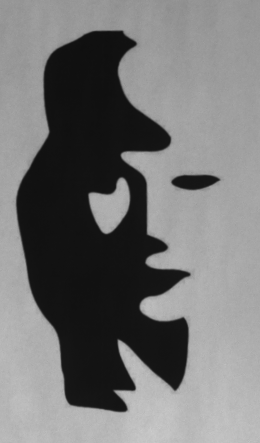} &
    \includegraphics[width=\linewidth,height=1.4\linewidth]{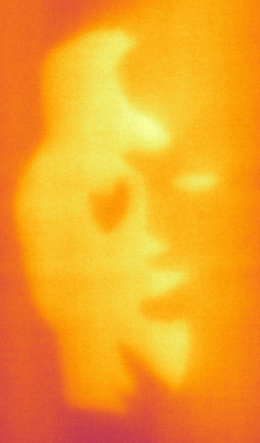} &
    \includegraphics[width=\linewidth,height=1.4\linewidth]{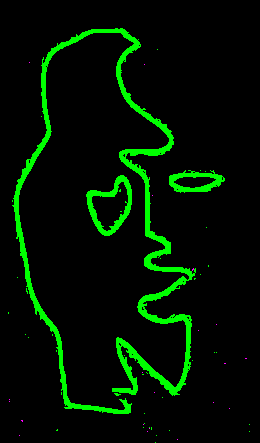} &
    \includegraphics[width=\linewidth,height=1.4\linewidth]{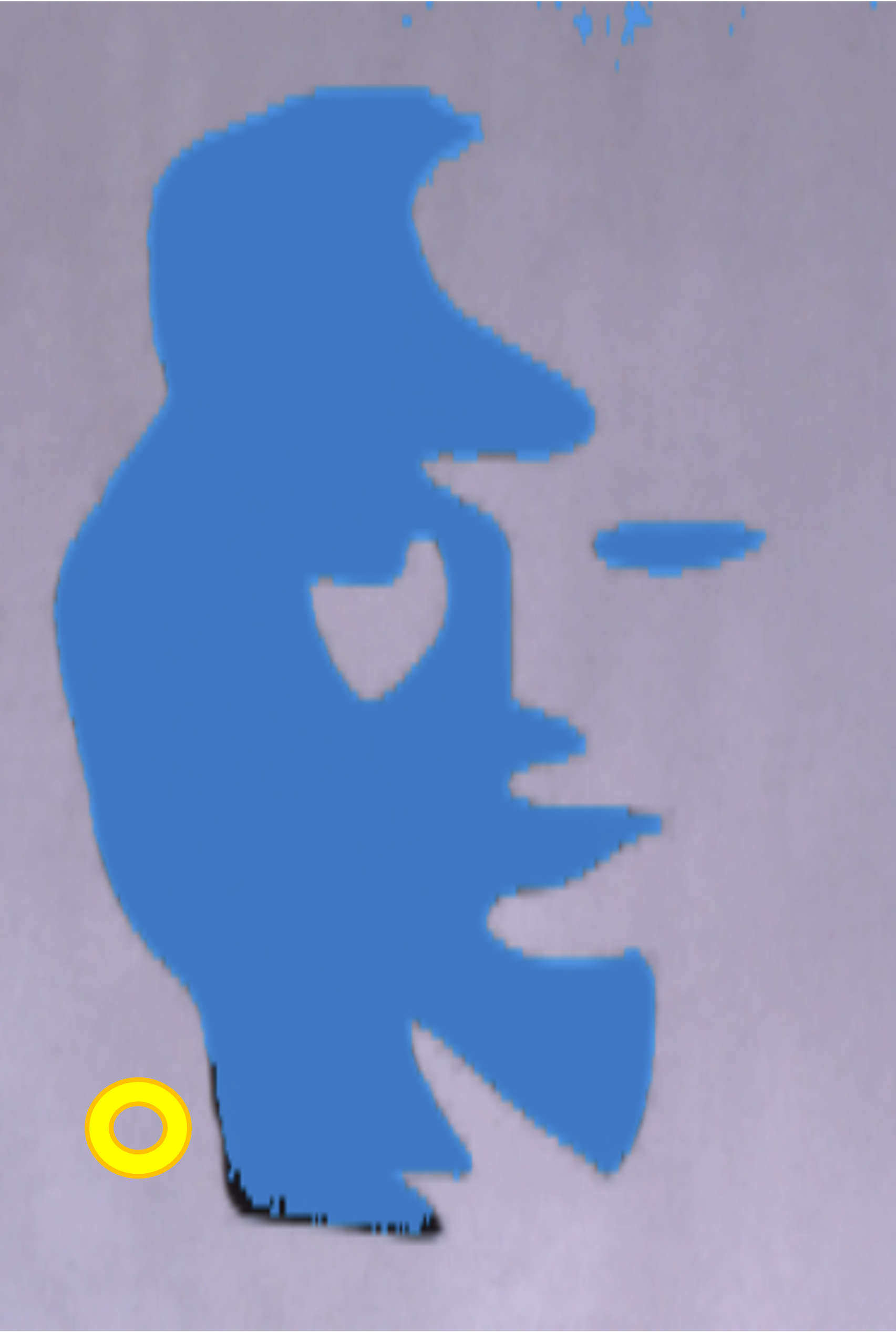} &
    \includegraphics[width=\linewidth,height=1.4\linewidth]{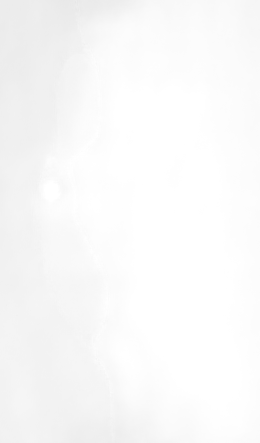} &
    \includegraphics[width=\linewidth,height=1.4\linewidth]{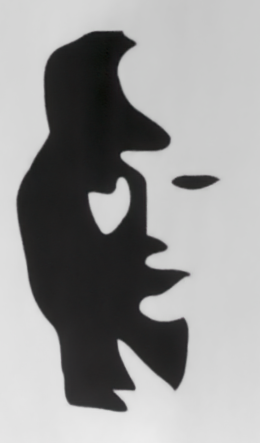} \\
    \includegraphics[width=\linewidth,height=1.4\linewidth]{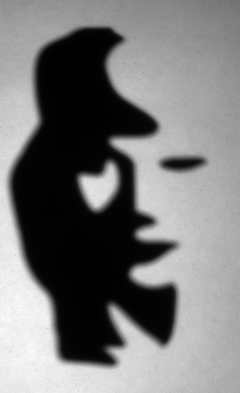} &
    \includegraphics[width=\linewidth,height=1.4\linewidth]{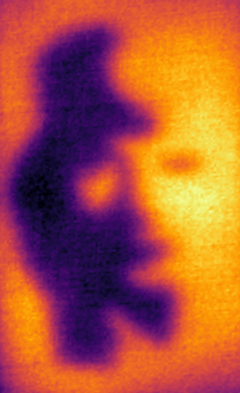} &
    \includegraphics[width=\linewidth,height=1.4\linewidth]{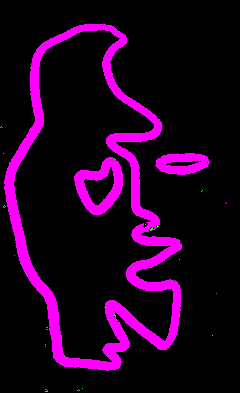} &
    \includegraphics[width=\linewidth,height=1.4\linewidth]{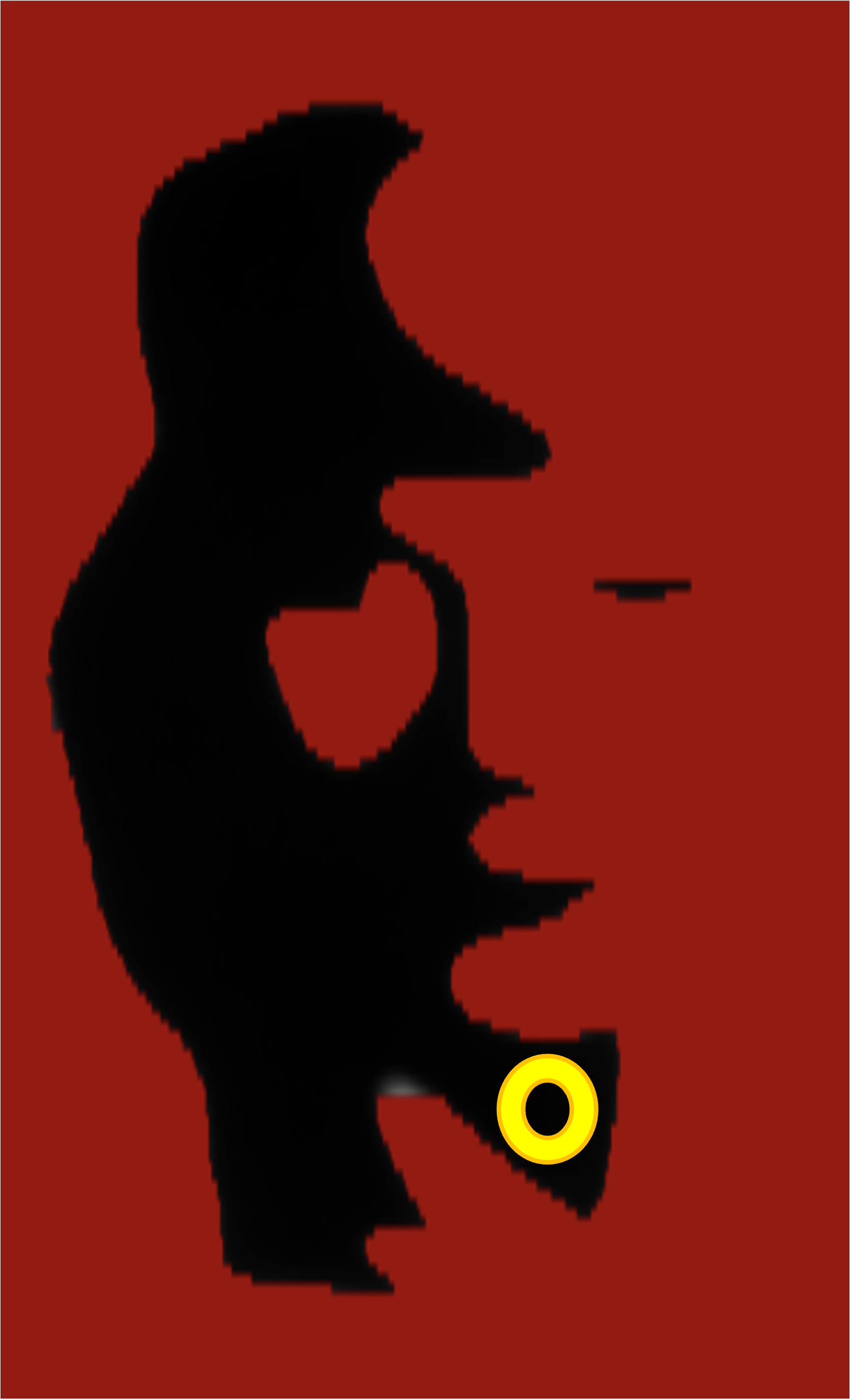} &
    \includegraphics[width=\linewidth,height=1.4\linewidth]{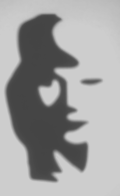} &
    \includegraphics[width=\linewidth,height=1.4\linewidth]{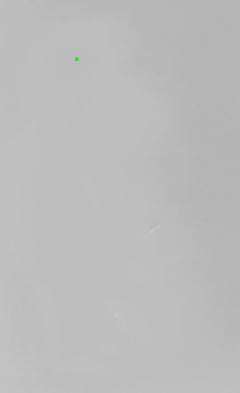} \\
  \end{tabularx}
  \caption{
    Printed (top) vs. projected (bottom) Roger Shepard’s illusion~\cite{MindSights}. 
    \textbf{Top}: a \textit{printed} paper lit by an incandescent bulb, where reflectance variations reveal a saxophone player. 
    \textbf{Bottom}: the same pattern \textit{projected} onto a uniform cardboard, where modulated shading reveals a lady’s face. 
    This comparison highlights the albedo-shading ambiguity and motivates modeling light–heat transport: reflectance induces inverse visible–thermal ordinalities, while shading yields consistent ones. 
    Columns 3-4 show classified \textcolor{customGreen}{albedo-} / \textcolor{customMagenta}{shading-}dominant edges (\autoref{sec:local_constraint}) and points of \textcolor{customBlue}{lower albedo} / \textcolor{customRed}{higher shading} than \protect\ycircle ~(\autoref{sec:non-local_constraint}).
    Our method decomposes correctly (right), whereas baselines fail (see supplementary).
    }   
  \label{fig:teaser}
  \vspace{-0.2in}
\end{figure}

\section{Theory of Visible-Thermal Ordinality}
\label{sec:theory}


We present the theoretical relationship between a visible and thermal image pair and show that the ordinality of their pixel intensities directly convey the ordinality of the underlying albedo or shading, as illustrated in ~\autoref{fig:teaser}.
We first consider visible illumination (e.g., an LED), then extend our analysis to include invisible illumination component (e.g., infrared light in incandescent bulbs and sunlight).

\subsection{Visible-only Illumination}
\label{subsec:vis_illum}
Consider an opaque Lambertian scene imaged by visible and thermal cameras. 
The visible intensity at a pixel $x$ is:
\begin{equation}
    I_v(x) = g \rho(x) \eta(x),
    \label{eq:vis_irr}
\end{equation}
where $\rho$ is the albedo (reflectance), $\eta$ the shading (irradiance), and $g = k/\pi$ a global scale determined by the camera gain $k$.
For brevity, we omit $x$ denoting a single pixel. 

Light not reflected is absorbed by the surface and converted into heat, producing a heat source of intensity:
\begin{equation}
    \mathcal{H} = (1 - \rho)\eta.
    \label{eq:heat_gen}
\end{equation}

\vspace{-0.05in}
This heat propagates 
through conduction, convection and radiation according to the heat transport equation.
Though $\mathcal{H}$ is not directly measurable, it can be inferred from surface temperature, which is indirectly observed by a thermal camera.
 JoLHT-Video~\cite{joint_light_and_heat_transport} modeled the light-heat transport using thermal video of the heating process to estimate $\mathcal{H}$ and solve for $\rho$ and $\eta$.
In contrast, we use a single thermal image $I_t$ near thermal equilibrium, easily attainable within seconds under stable lighting.

Together, $I_v$ and the absorbed heat image $\mathcal{H}$ impose local and non-local constraints on albedo and shading.

\subsubsection{Local (Edge) Constraints}
\label{sec:local_constraint}
The spatial gradient of the visible image can be written as:
\begin{equation}
    \nabla I_v = g(\nabla \rho) \eta + g\rho (\nabla \eta).
    \label{eq:vis_edge}
\end{equation}
For most edges in natural images, one of the two terms on the right dominates—edges arise primarily from either albedo or shading variations~\cite{land1977retinex,gevers2003reflectance}.
This creates a fundamental ambiguity, but the spatial gradients of the heat source provide complementary information:
\begin{equation}
    \nabla \mathcal{H} = (-\nabla \rho) \eta + (1-\rho) \nabla \eta.
    \label{eq:heat_edge}
\end{equation}
From ~\eqref{eq:vis_edge} and ~\eqref{eq:heat_edge}, we have
\begin{subequations}
\begin{align}
    &\textbf{Albedo Edge} (\nabla \eta \rightarrow 0) \textbf{:} \:  \textrm{sign}(\nabla I_v) = -\textrm{sign}(\nabla \mathcal{H}), \\
    &\textbf{Shading Edge} (\nabla \rho \rightarrow 0) \textbf{:} \:   \textrm{sign}(\nabla I_v) = \textrm{sign}(\nabla \mathcal{H}).
\label{eq:edge_classify}    
\end{align}
\end{subequations}
This yields a simple criterion to distinguish albedo- and shading-dominant edges using the heat intensity image.

\subsubsection{Non-Local (Point-Pair) Constraints}
\label{sec:non-local_constraint}
We generalize the above gradient analysis to compare point pairs, i.e., any two distinct pixels $x_i$ and $x_j$ in the scene.
\vspace{-0.15in}
\begin{subequations}
\begin{align}
    I_v(x_i) = g\rho(x_i) \eta(x_i), &\; \mathcal{H}(x_i) = (1 - \rho(x_i)) \eta(x_i), \\
    I_v(x_j) = g\rho(x_j) \eta(x_j), &\; \mathcal{H}(x_j) = (1 - \rho(x_j)) \eta(x_j).
\end{align}   
\label{eq:point_pair}
\end{subequations}
When a pixel’s visible intensity is lower (or higher) than another’s while its thermal intensity is higher (or lower), the pixel’s albedo is correspondingly lower (or higher).
\begin{proposition}[Albedo Ordinality]
For pixels $x_i, x_j$ with visible and heat intensities as in ~\eqref{eq:point_pair}, if $I_v(x_i) < I_v(x_j)$ and $\mathcal{H}(x_i) > \mathcal{H}(x_j)$, then $\rho(x_i) < \rho(x_j)$, and vice versa. 
\label{thm:albedo_ordinal}
\end{proposition}
\vspace{-0.08in}

Conversely, when both visible and heat intensities are lower (or higher), its shading is also lower (or higher).

\vspace{-0.05in}
\begin{proposition}[Shading Ordinality]
For pixels $x_i, x_j$ with visible and thermal intensities as in~\eqref{eq:point_pair}, if 
$I_v(x_i) < I_v(x_j)$ and $\mathcal{H}(x_i) < \mathcal{H}(x_j)$, then $\eta(x_i) < \eta(x_j)$, and vice versa.
\label{thm:shading_ordinal}
\end{proposition}
\vspace{-0.08in}

Proofs for \autoref{thm:albedo_ordinal} and \autoref{thm:shading_ordinal} are in the supplementary.
The ordinalities here rely on $\mathcal{H}$, the heat from absorbed visible light. Next, we incorporate invisible light and relate $\mathcal{H}$ to thermal image intensity, $I_t$.

\subsection{Visible and Invisible Illumination}
\label{subsec:vis_invis_illum}
\vspace{-0.05in}
Common light sources such as sunlight and incandescent lamps emit significant invisible radiation (e.g., infrared).  
While the visible camera captures only reflected light within its spectral response, heat generation arises from absorbed energy across all wavelengths.  
Thus, the equation for the heat source intensity has an additional term as follows:
\begin{equation}
    \mathcal{H} = (1 - \rho_v)\eta + (1-\rho_i)\frac{l_i}{l_v} \eta,
    \label{eq:heat_irr}
\end{equation}
where $\rho_i$ is the average albedo in the invisible band, $l_i/l_v$ is the ratio of light intensity in the invisible and visible spectra.

While albedo variations are prominent in the visible spectrum, their counterparts in the infrared are much smaller~\cite{ChoeG}. Thus, we assume that $\rho_i$ is locally constant within a region, allowing \eqref{eq:heat_irr} to be approximated as:
\begin{equation}
    \mathcal{H} = (\beta - \rho_v) \eta, \quad \textrm{s.t.} \quad \beta = 1 + (1 - \rho_i) l_i/l_v.
\end{equation}
As $\beta$ is locally constant, ~\eqref{eq:edge_classify} still holds as $\nabla \mathcal{H}$ is invariant to a constant offset in $\mathcal{H}$.
Also, as $\beta > 1$, ~\autoref{thm:albedo_ordinal} and ~\autoref{thm:shading_ordinal} holds whenever $\beta$ is same for the two points. 




\subsection{Relating heat intensity to a single thermal image}
\label{subsec:S_to_It}
\vspace{-0.05in}
While heat intensity is not directly observable, we show that a thermal image is a reliable proxy for ordinality constraints.

The heat transport equation at a surface point is:
\begin{equation}
    \mathcal{C}_h \frac{\partial T}{\partial t} = \mathcal{H} + h_c (T_a - T) + 4\epsilon \sigma T_s^3 (T_s - T) + \kappa \Delta T,
    \label{eq:hte}
\end{equation}
where $\mathcal{C}_h$ is the heat capacity, $T$ the surface temperature, $t$ time, $h_c$ the convection coefficient, $T_a$ the air temperature, $\epsilon$ the surface emissivity, $\sigma$ the Stefan-Boltzmann constant, $T_s$ the surrounding temperature, $\kappa$  the thermal conductivity, and $\Delta$ denotes the Laplacian operator along the surface. 
%
A static scene under constant lighting reaches thermal equilibrium when the left side of ~\eqref{eq:hte} is zero, giving
\begin{equation}
    \mathcal{H} = (h_c + 4\epsilon \sigma T_s^3)T - \kappa \Delta T - (h_c T_a + 4\epsilon \sigma T_s^4).
    \label{eq:heat_intensity}
\end{equation}

The image intensity measurement $T_t$ made by a thermal camera is related to the temperature $T$ as follows:
\begin{equation}
    I_t = \epsilon U(T) + (1-\epsilon) U(T_s),
    \label{eq:thr_img}
\end{equation}
where $U$ denotes the thermal camera's response function.
Linearizing $U$ as $U(T) = p_1 T + p_2$ in ~\eqref{eq:thr_img}, we get
\begin{equation}
    T =  a_1 I_t - a_2 \quad \text{s.t.} \; a_1 = \frac{1}{\epsilon p_1},\; a_2 = \frac{p_2 + p_1T_s(1-\epsilon)}{\epsilon p_1}.
    \label{eq:linear_T}
\end{equation}
Substituting ~\eqref{eq:linear_T} in ~\eqref{eq:heat_intensity}, we get
\begin{equation}
    \mathcal{H} = c_1 I_t - c_2 \Delta I_t - c_3,
\end{equation}
where $c_1 = \frac{h_c + 4\epsilon \sigma T_s^3}{\epsilon p_1}$, $c_2 = \frac{\kappa}{\epsilon p_1}$, and $c_3 = (h_c + 4\epsilon \sigma T_s^3)(\frac{p_2+p_1T_s(1-\epsilon)}{\epsilon p_1}) + (h_cT_a+4\epsilon \sigma T_s^4)$. 

The thermal properties such as $\epsilon$, and $\kappa$ have small variations irrespective of the variation in albedo~\cite{doi:https://doi.org/10.1002/9783527693306}.
The environmental variables such as $h_c, T_a$, and $T_s$ are also similar.
Therefore, $c_1, c_2$ and $c_3$ are similar within a region. 
Also, thermal conductivity of many common materials, excluding metals, is low. 
Likewise, the Laplacian of a temperature field at steady state has a much smaller magnitude than absolute temperatures~\cite{doi:https://doi.org/10.1002/9783527693306}. 
Therefore, we ignore the conduction term. Then, 
as $c_1 > 0$, the ordinal relationships between $\mathcal{H}$ at two points is the same as that of $I_t$.
\begin{proposition}
In local regions, $c_1, c_2$ and $c_3$ are constant so that for any two pixels $x_i, x_j$,  if $\mathcal{H}(x_i)$ is less (or more) than $\mathcal{H}(x_j)$, then $I_t(x_i)$ is also less (or more) than $I_t(x_j)$.
\label{assum:S_to_It_preserved}
\end{proposition}

\subsection{Ordinality of Albedo and Shading}
\label{subsec:alb_shad_ordinality}
\vspace{-0.05in}
Using Prop.~\ref{assum:S_to_It_preserved}, we can extend the results from ~\autoref{eq:edge_classify} to use thermal image intensities, as summarized below:
\begin{subequations}
\small
\begin{align}
    \textbf{Albedo Edge} (\nabla \eta = 0) \textbf{:} \: & \textrm{sign}(\nabla I_v) = -\textrm{sign}(\nabla I_t), \\
    \textbf{Shading Edge} (\nabla \rho = 0) \textbf{:} \: &  \textrm{sign}(\nabla I_v) = \textrm{sign}(\nabla I_t).
\end{align}
\label{eq:edge_classify_It}
\end{subequations}
Similarly, we extend \autoref{thm:albedo_ordinal} and \autoref{thm:shading_ordinal} to thermal image intensities, yielding the following ordinal relationships:
\begin{subequations}
\small
\begin{align}
    I_v(x_i) > I_v(x_j) , I_t(x_i) > I_t(x_j) & \Rightarrow \eta(x_i) > \eta(x_j), \\
    I_v(x_i) < I_v(x_j) , I_t(x_i) < I_t(x_j) & \Rightarrow \eta(x_i) < \eta(x_j), \\
    I_v(x_i) > I_v(x_j) , I_t(x_i) < I_t(x_j) & \Rightarrow \rho(x_i) > \rho(x_j), \\
    I_v(x_i) < I_v(x_j) , I_t(x_i) > I_t(x_j) & \Rightarrow \rho(x_i) < \rho(x_j).
\end{align}
\label{eq:alb_shad_ordinals}
\end{subequations}

\vspace{-0.25in}
\section{Method}
\label{sec:method}
\vspace{-0.05in}


Using the ordinalities as loss functions, we optimize the albedo and shading from a visible-thermal image pair.
Let $I_v$ be a $k-$channel visible image and $I_t$ be the corresponding aligned thermal image. 
Let $\hat{\rho}$ and $\hat{\eta}$ be an estimate of the $k-$channel albedo and grayscale shading. 
Let $\bar{I}_v$ and $\bar{\rho}$ be the grayscale image and albedo estimate, respectively.






\subsection{Local (Edge) Loss}
\label{subsec:edge_loss}
\vspace{-0.03in}
Using \autoref{eq:edge_classify_It}, we label edges (A for albedo, S for shading) based on their local visible–thermal gradients (\autoref{fig:teaser}):
\begin{equation}
    \mathcal{C}(x) = \begin{cases}
        \text{A} & |\nabla \bar{I}_v| > \epsilon_m, |\frac{\nabla \bar{I}_v \nabla I_t}{\| \nabla \bar{I}_v \| \| \nabla I_t \|} | > \epsilon_p, \\
        \text{S} & |\nabla \bar{I}_v| > \epsilon_m, |\frac{\nabla \bar{I}_v \nabla I_t}{\| \nabla \bar{I}_v \| \| \nabla I_t \|} | < \epsilon_p, \\
    \end{cases}
    \label{eq:edge_classifier}
\end{equation}
where $\epsilon_m$ suppresses textureless regions and $\epsilon_p$ thresholds the cosine similarity between visible and thermal gradients. 

Before computing $\nabla I_t$, we apply Gaussian smoothing to reduce noise while maintaining gradient consistency.
With the class labels above, we formulate an edge loss that penalizes albedo gradients at shading-dominant pixels and vice versa, where $\Omega$ denotes all image pixels:
\vspace{-4pt}
\begin{equation}
    \small
    \mathcal{L}_{\text{edge}} (\bar{\rho}, \hat{\eta}, \mathcal{C}) = \frac{1}{|\Omega|} \Big[ \sum_{\mathcal{C}(x) = S}
         \| \nabla \bar{\rho}(x)\|^2 
         + \sum_{\mathcal{C}(x) = A}
         \| \nabla \hat{\eta}(x)\|^2 \Big].
\end{equation}


\subsection{Non-Local (Point-Pair) Loss}
\label{subsec:point_pair_loss}

During optimization, we use Poisson disk sampling~\cite{bell14IIW} to generate random point pairs across the image. 
%
%
%
%
%
%
%
Using \autoref{eq:alb_shad_ordinals}, each pair $(x_i, x_j)$ is assigned a class label based on their normalized intensity differences $\delta I_v$ and $\delta I_t$:
\vspace{-6pt}
\begin{equation}
\mathcal{P}(x_i,x_j)=
\begin{cases}
S_+ & \!\!\delta I_v>\epsilon_d,~\delta I_t>\epsilon_d,\\
S_- & \!\!\delta I_v<-\epsilon_d,~\delta I_t<-\epsilon_d,\\
A_+ & \!\!\delta I_v>\epsilon_d,~\delta I_t<-\epsilon_d,\\
A_- & \!\!\delta I_v<-\epsilon_d,~\delta I_t>\epsilon_d,\\
\end{cases}
\label{eq:pair-classes}
\end{equation}
\vspace{-2pt}
where $\delta I_x=\tfrac{I_x(x_i)-I_x(x_j)}{Z_x}$ with normalization $Z_x$ so that threshold $\epsilon_d$ is relative.
The ordinal loss is a hinge-based formulation that enforces separation beyond a margin $\varepsilon_m$:
\vspace{-4pt}
\begin{equation}
\small
\mathcal{L}_{\text{ord}}
=\tfrac{1}{|\mathcal{P}|}\!\!\sum_{(x_i,x_j)}\!
\begin{cases}
\max(\hat{\eta}_j-\hat{\eta}_i+\varepsilon_m,0),&\mathcal{P}(x_i,x_j)\!=\!S_+,\\
\max(\hat{\eta}_i-\hat{\eta}_j+\varepsilon_m,0),&\mathcal{P}(x_i,x_j)\!=\!S_-,\\
\max(\bar{\rho}_j-\bar{\rho}_i+\varepsilon_m,0),&\mathcal{P}(x_i,x_j)\!=\!A_+,\\
\max(\bar{\rho}_i-\bar{\rho}_j+\varepsilon_m,0),&\mathcal{P}(x_i,x_j)\!=\!A_-.
\end{cases}
\label{eq:ord-loss}
\end{equation}

\vspace{-0.1in}
\subsection{Regularization using Deep Image Prior}
\label{subsec:dip}


In complex real scenes, 
thermal noise can corrupt subtle gradients, 
and ordinal constraints alone cannot fully determine absolute albedo or shading values—they only restrict the solution space.
Therefore, we adopt a variant of the Deep Image Prior~\cite{ulyanov20dip} to parameterize albedo and shading, leveraging the inherent architectural prior in a randomly initialized network for regularization.

We employ a Double-DIP (DDIP) architecture ~\cite{gandelsman2018doubledip} with two networks $\mathcal{N}(z_A, \Theta_A), \mathcal{N}(z_S, \Theta_S)$ to parameterize albedo and shading, respectively. 
Each uses a convolutional encoder–decoder with skip connections~\cite{ulyanov20dip}. 
$\Theta_A, \Theta_S$ are randomly initialized model weights and $z_A,z_S$ are randomly sampled input noise vectors.
The albedo network outputs a $3$-channel image bounded to $[0,1]^3$ via a sigmoid activation, while the shading network predicts a single channel constrained by a non-negativity penalty.
We freeze $z_A$ and $z_S$ while only optimizing for $\Theta_A$ and $\Theta_S$.

\subsection{Optimization}

Our complete objective function is as follows.
\begin{multline}
\mathcal{L}(\hat{\rho}, \hat{\eta}, I_v, I_t) = \|\hat{\rho} \cdot \hat{\eta} - I_v \|_2 + \lambda_1  \mathcal{L}_{\text{edge}}(\bar{\rho}, \hat{\eta}, \mathcal{C}(\bar{I}_v, I_t)) + \\ \lambda_2 
\mathcal{L}_{\text{ord}}(\bar{\rho}, \hat{\eta}, \mathcal{P}(\bar{I}_v, I_t)),
\end{multline}
where $\lambda_1, \lambda_2 > 0$ are the respective loss weights.
The thermal image is used only for edge or point pair losses, which operate on the mean albedo. The reconstruction loss is defined on the $3-$channel image.

\begin{figure}[ht]
    \centering
    \includegraphics[width=1\linewidth]{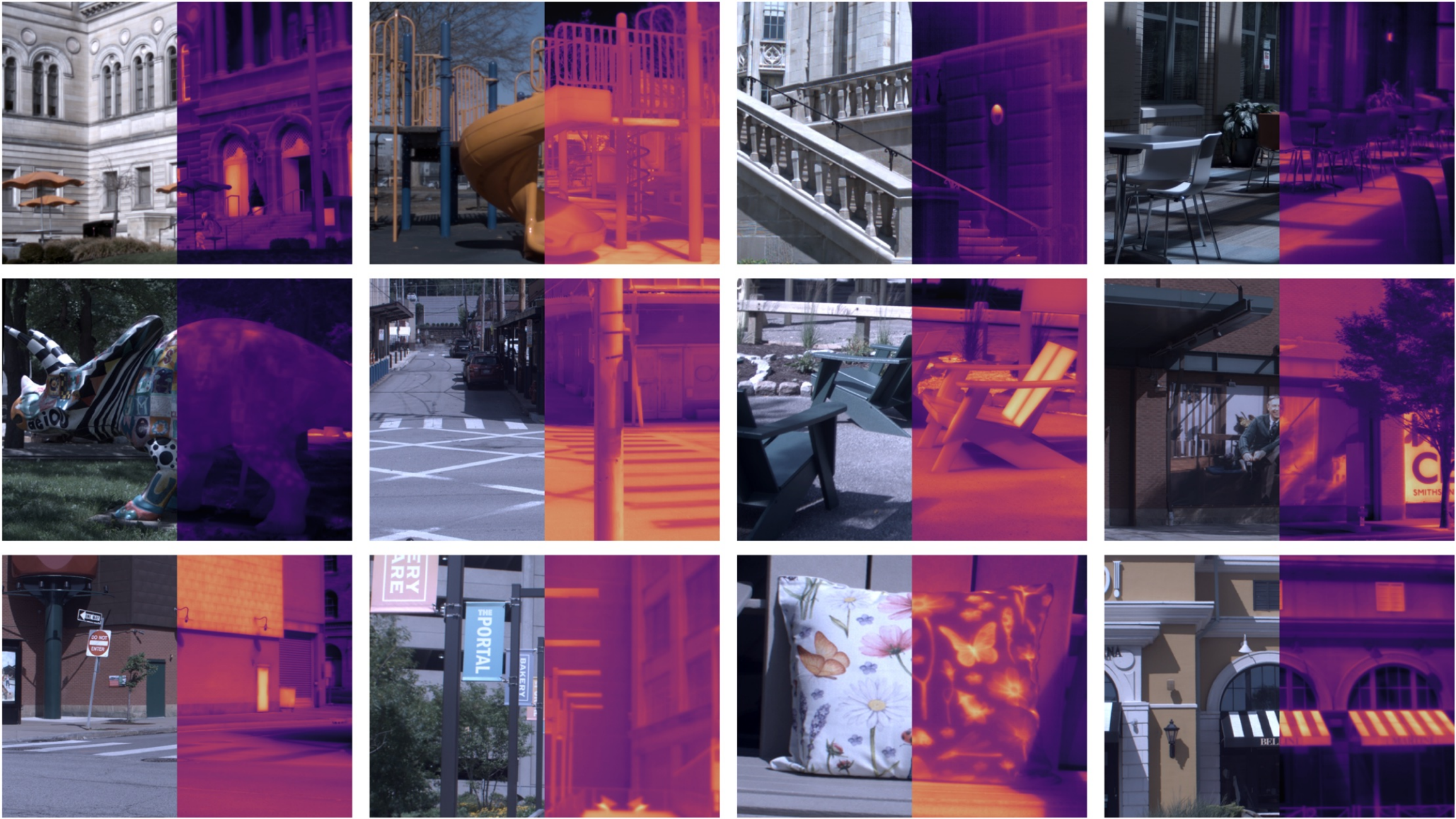}
    \vspace{-0.25in}
    \caption{Visible–thermal image pair examples in the \dataset{} dataset, covering diverse scenes including parks, schools, cathedrals, plazas, museums, and various urban streets.
    }
    \label{fig:dataset_mosaic}
    \vspace{-0.20in}
\end{figure}

\begin{figure*}[!ht]
    \centering
    \includegraphics[width=1\linewidth]{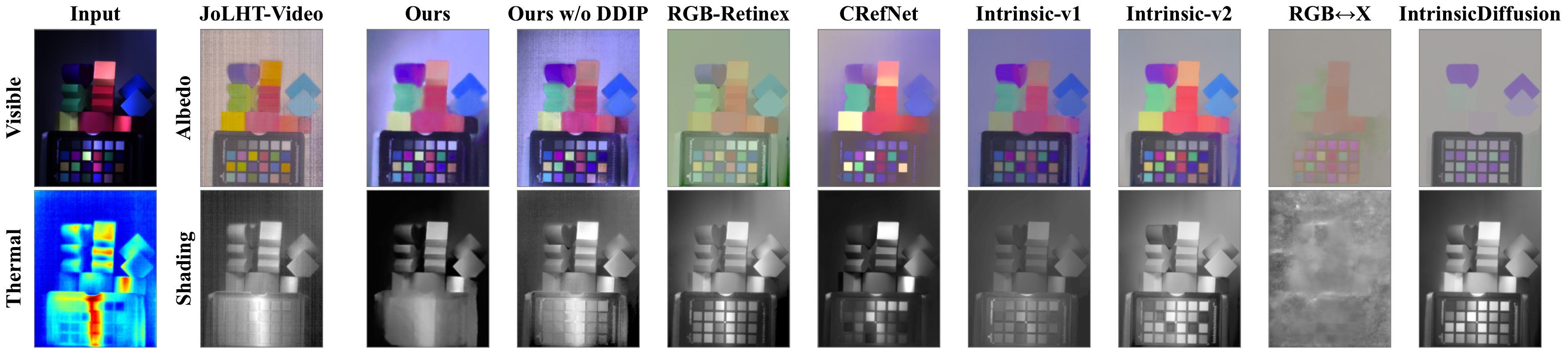}
    \vspace{-20pt}
    \caption{Results on a color-chart scene in JoLHT-Video dataset. 
    Our method recovers the smooth line-light shading across the color chart.
    }
    \label{fig:wooden_block_comparison}
     \vspace{-0.1in}
\end{figure*}

\begin{figure*}[t]
    \centering
    \includegraphics[width=1\linewidth]{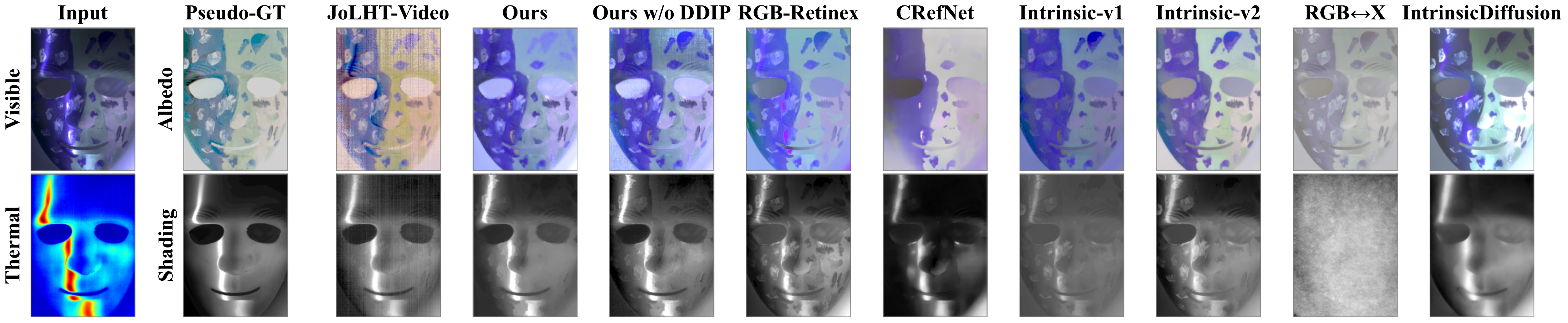}
    \vspace{-20pt}
    \caption{Results on \emph{Painted-Mask} scene in JoLHT-Video dataset.
    Baselines show albedo texture in shading or highlight artifacts in albedo. 
    }
    \label{fig:pseudo-gt_mask}
    \vspace{-0.2in}
\end{figure*}

\begin{figure*}[p!]
    \centering
    \captionsetup{skip=2pt}
    \includegraphics[width=\linewidth
    ]{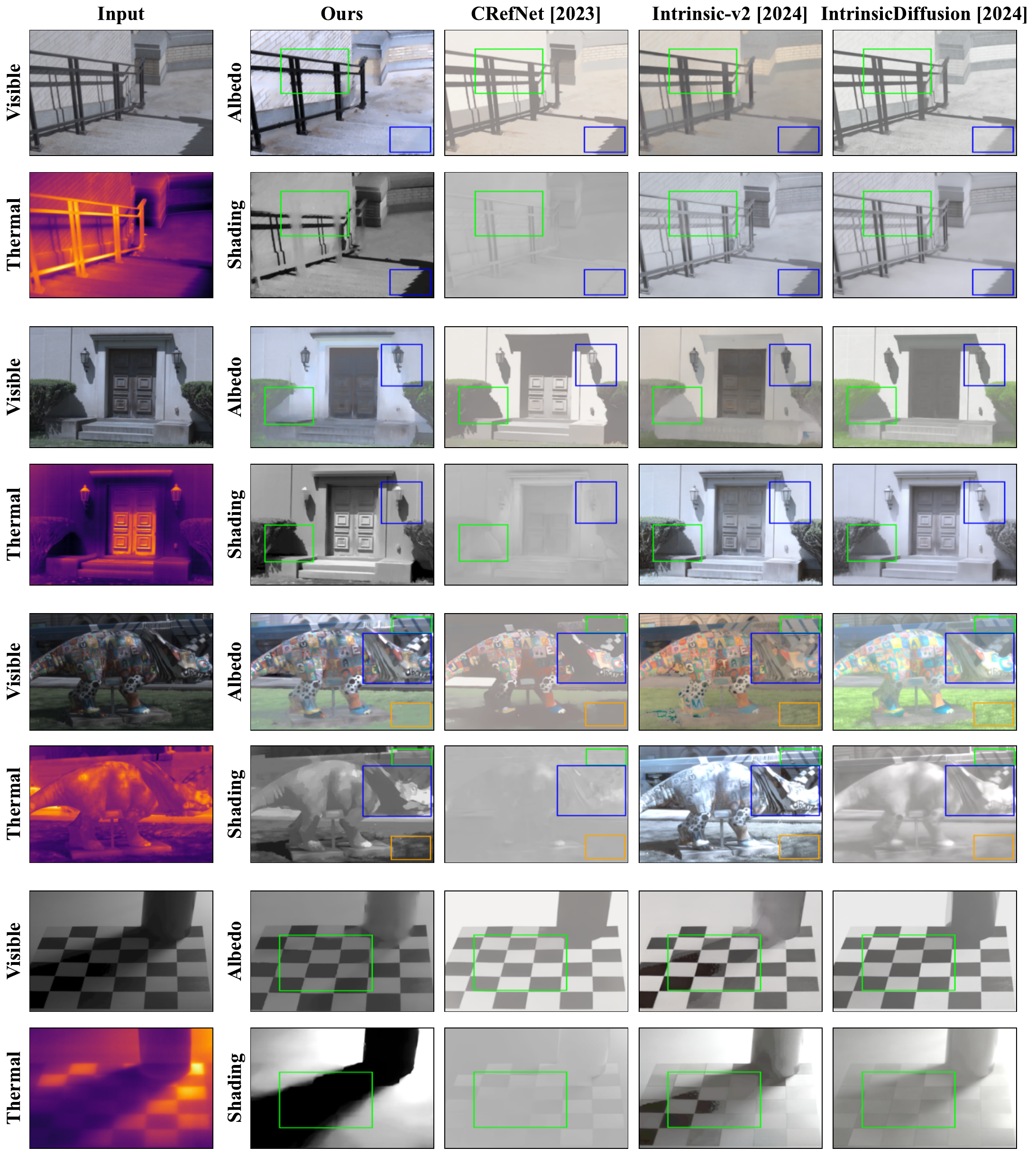}
    \caption{
    \textbf{Qualitative comparisons to state-of-the-art baselines.} 
    The first two scenes show how our method removes cast shadows from albedo (e.g., shadow of handrail in case 1, lanterns in case 2). The next three demonstrate our ability to eliminate albedo texture from shading (e.g., rhino statue texture in case 3, checkerboard pattern in case 4). 
    But baselines struggle with these challenges, despite their advantage of pre-training on large datasets, whereas our approach relies solely on physics-based information in a single thermal image.
    Baselines often over-smooth albedo and shading (e.g., smooth albedo on detailed ground and walls, flat shading on grass) due to reliance on priors. 
    Diffusion-based baselines can offer appealing visual quality but sacrifice faithfulness (e.g. hallucinated albedo texture on the rhino statue in case 3).
    Images are tonemapped for visualization. 
    Key differences are highlighted in bounding boxes.
    \emph{More examples and baselines appear in the supplementary.}
    }
    \label{fig:qualitative_comparison}
\end{figure*}

\begin{table*}[t]
\centering
\small
\renewcommand{\arraystretch}{1.2}
\setlength{\tabcolsep}{10pt}
\caption{Results of si-MSE ($\downarrow$) reported at $10^{-2}$ across datasets (\autoref{subsec:quant_eval}).
\textbf{Best} and \underline{second} highlighted. Our method surpasses all learning-based approaches despite using no learned priors and achieves performance comparable to JoLHT-Video, which demands transient thermal video under controlled illumination. 
N/A indicates unavailable data, and \ding{55} denotes non-applicability.}
\vspace{-0.14in}

\begin{tabular}{@{}lccc|ccc@{}}
\toprule
\multirow[t]{3}{*}{
  \begin{minipage}[t]{0.28\linewidth}\raggedright
  \textbf{Method}\\
  \footnotesize{
      \quad \optmark \; Optimization-based \\[-1pt]
      \quad \learnmark \; Learning-based \\[-1pt]
      \quad \physmark \; Physics-based (w/ auxiliary sensor)
  }
  \end{minipage}
}
& \multicolumn{3}{c}{\textbf{JoLHT-Video Dataset}}
& \multicolumn{3}{c}{\textbf{Color Chart w/ Different Illumination}} \\[-2pt]
\cmidrule(lr){2-4}\cmidrule(lr){5-7}

& \multicolumn{2}{c}{Painted Mask} & Color Charts
& White LED & Incandescent & Sunlight \\[-4pt]

& Albedo & Shading & Albedo
& Albedo & Albedo & Albedo \\
\hline

\optmark \; RGB-Retinex~\scriptsize{\cite{color-retinex}~(TPAMI'06)} &
25 & 0.30 & 3.4 &
2.42 & 2.33 & 2.73 \\

\optmark \; Opt-LocalSmooth~\scriptsize{\cite{IID_optimize_cvpr2011}~(CVPR'11)} &
45 & 0.35 & 7.1 &
2.41 & 4.21 & 2.04 \\

\learnmark \; IntrinsicDiffusion~\scriptsize{\cite{intrinsicDiffusion}~(SIGGRAPH’24)} &
37 & 0.25 & 2.9 &
4.12 & 3.33 & 4.85 \\

\learnmark \; RGB$\leftrightarrow$X~\scriptsize{\cite{zeng2024rgbx}~(SIGGRAPH’24)} &
30 & 0.37 & 2.8 &
4.07 & 5.31 & 4.59 \\

\learnmark \; Intrinsic-v2~\scriptsize{\cite{careagaIntrinsic-v2}~(ToG’24)} &
27 & 0.17 & 2.8 &
\underline{1.25} & 4.36 & 4.17 \\

\learnmark \; Intrinsic-v1~\scriptsize{\cite{careagaIntrinsic-v1}~(ToG’23)} &
30 & 0.21 & 3.8 &
1.55 & 2.72 & 4.97 \\

\learnmark \; CRefNet~\scriptsize{\cite{crefnet}~(TVCG’23)} &
38 & 0.23 & 8.8 &
1.79 & \underline{2.29} & \underline{1.98} \\

\physmark \; NIR-Priors~\scriptsize{\cite{cheng2019non}~(ICCV’19)} &
N/A & N/A & N/A &
\ding{55} & 2.46 & 2.08 \\

\physmark \; JoLHT-Video~\scriptsize{\cite{joint_light_and_heat_transport}~(CVPR’24)} &
\textbf{8.4} & \textbf{0.05} & \textbf{2.0} &
N/A & \ding{55} & \ding{55} \\

\rowcolor{gray!20}
\physmark \; Ours &
\underline{11} & \underline{0.10} & \underline{2.7} &
\textbf{0.37} & \textbf{1.06} & \textbf{1.19} \\

\bottomrule
\end{tabular}

\vspace{-0.18in}
\label{tab:simulationComparison}
\label{tab:our_color_chart}
\label{tab:quantitative_merged}
\label{tab:quantitative_all}
\end{table*}

\section{\dataset Dataset}
\label{sec:dataset}
\vspace{-0.04in}
Existing IID datasets lack thermal modalities, while current visible–thermal datasets (e.g., captured from vehicles or drones) focus on dynamic objects (people, cars) or non-light-absorption based heat sources (engines, people) that are out of scope for this work. So, we collected 600 visible–thermal image pairs (\autoref{fig:dataset_mosaic}) across diverse stationary scenes under varying illumination to validate our method.

\textbf{Imaging System.}\,
We co-locate a FLIR Boson thermal camera ($512\times640$ resolution, $24^\circ$ HFOV, $\leq 50\text{mK}$ NEDT) with an IDS UI-3130 color camera ($600\times 800$ resolution, $27^\circ$ HFOV) using a gold dichroic mirror (BSP-DI-25-2). For distant outdoor scenes, the cameras are placed side by side and aligned via homography.

\textbf{Data Acquisition and Preprocessing.}\,
We captured 20 exposure-bracketed color images with geometrically spaced exposure times and merged them into a linear HDR image~\cite{debevec1997} after edge-aware demosaicing in OpenCV. 
Five frames were averaged to suppress sensor noise. The visible HDR and thermal images were aligned via homography.


Our dataset contributes in two key aspects:
(i) a large collection of high-quality real-world outdoor images with diverse albedo–shading combinations (vs. predominantly indoor/synthetic prior datasets);
(ii) abundant pseudo-ground-truth albedo and shading ordinalities for training and evaluation.
As shown in \autoref{sec:expert_validation}, the thermal image produces reliable albedo/shading ordinalities across \emph{arbitrary pixel pairs}—previously only available in limited form due to costly human labeling 
(IIW~\cite{bell14IIW}).

\vspace{-0.05in}
\section{Experiments}
\label{sec:results}
\vspace{-0.04in}

\textbf{Datasets.}
As IID datasets lack associated thermal images, we construct the \dataset dataset for qualitative evaluation.
Obtaining ground truth albedo and shading for real-world scenes is impractical. 
Therefore, for quantitative evaluation, we collected images of a color chart under different illuminations: white LED light, incandescent bulb and sunlight.
We also evaluate on the JoLHT-Video dataset~\cite{joint_light_and_heat_transport},
which contains four color-chart scenes under varied illuminations and a \emph{Painted-Mask} scene.
In supplementary, we evaluate on MIT-Intrinsic~\cite{MITIntrinsic} dataset by simulating an ideal thermal image using pseudo-ground truth.




\textbf{Metrics.} We use the scale-invariant Mean Square Error (si-MSE) to evaluate albedo and shading quantitatively. 


\textbf{Baselines.} We compare with state-of-the-art methods in three categories.
\emph{Learning-based:} Diffusion-based IntrinsicDiffusion~\cite{intrinsicDiffusion} and RGB$\leftrightarrow$X~\cite{zeng2024rgbx}, CNN-based Intrinsic-v1~\cite{careagaIntrinsic-v1} and Intrinsic-v2~\cite{careagaIntrinsic-v2}, and Transformer-based CRefNet~\cite{crefnet}.
\emph{Physics-based:} NIR-Priors~\cite{cheng2019non}, requiring a paired NIR image, and JoLHT-Video~\cite{joint_light_and_heat_transport}, demanding transient thermal video under controlled illumination.
\emph{Optimization-based:} RGB-Retinex~\cite{color-retinex} and Opt-LocalSmooth~\cite{IID_optimize_cvpr2011}.
IntrinsicDiffusion, RGB$\leftrightarrow$X, and Intrinsic-v2 output colorful shading, while others grayscale.

\vspace{-0.06in}
\subsection{Qualitative Evaluation}
\vspace{-0.04in}

\autoref{fig:qualitative_comparison} presents comparisons with state-of-the-art baselines across various scenes.
The first two cases demonstrate our ability to remove cast shadows from albedo (e.g., handrail and lantern shadows), while the next two highlight disentangling albedo texture from shading (e.g., rhino statue texture and checkerboard pattern).
The final example is an homage to the classic Adelson’s Checker-Shadow Illusion~\cite{adelson1}, where our method successfully separates the shadowed checker region from the cylinder shading.

Learning-based baselines often over-smooth albedo and shading due to strong statistical priors, producing flat grass shading or overly uniform wall colors. In contrast, our physics-based approach, guided solely by a single thermal image, better preserves details such as block-wise albedo variation, concrete texture, and natural shading gradients.

\autoref{fig:wooden_block_comparison} and \autoref{fig:pseudo-gt_mask} show results on JoLHT-Video dataset. Our method recovers smooth line-light shading comparable to JoLHT-Video using only a single thermal image, while other baselines show clear albedo or shading leakage.


\vspace{-0.02in}
\subsection{Quantitative Evaluation}
\label{subsec:quant_eval}
\vspace{-0.02in}
\subsubsection{Validating Ordinalities}
\label{sec:expert_validation}
\vspace{-0.05in}
We validated our ordinality theory on a wide range of real-world materials and scenes via expert annotations, and confirm statistically with a spectral reflectance dataset~\cite{kokaly2017usgs} that invisible component rarely overturn these ordinalities.

\noindent\textbf{Patch Ordinality on Various Materials.}
\label{sec:material_validation}
The evaluation included 20 patches from CUReT dataset~\cite{curet_dataset} and common objects (painted aluminum, plastic, wood, silk, leather, cloth, plaster, etc.). These patches were placed in different orientations under artificial and natural lighting. Experts confidently labeled 865 ordinalities across patches. Our prediction matched the expert labels with 98.59\% accuracy in sunlight (albedo: 99.37\%, shading: 97.01\%) and 96.82\% under white-LED (albedo: 94.62\%, shading: 100\%).

\noindent\textbf{Point-Pair Ordinality on Diverse Scenes.}
We further evaluated on 100 real-world scenes in \dataset dataset (\autoref{sec:dataset}), spanning materials such as stone, concrete, grass, vegetation, painted metal, plastic, and wood. 
Experts labeled the ordinalities in albedo or shading of 20 randomly sampled point pairs per image using the visible image as reference.
Pairs with small intensity differences were excluded to avoid ambiguity. 
Experts confidently labeled 1,063 pairs and found 937 unclear.
Ignoring the latter, our theory achieved 98.95\% overall accuracy (albedo: 96.96\%, shading: 99.62\%), confirming the reliability of thermal-guided ordinal cues. More details are in the supplementary.

\vspace{0.02in}
\noindent\textbf{Statistical Robustness to Invisible Spectral Components.} Violations of ordinalities due to invisible reflectance {\it are possible but statistically unlikely}. Above we validated ordinalities with expert labels for many materials and lighting conditions with non-visible components. As a further evaluation, we used the Solar Spectral Irradiance (ASTM-G173) and 427 materials from USGS Spectral Reflectance dataset~\cite{kokaly2017usgs}, and confirmed that the ordinality of absorbed visible light matches the ordinality of total absorbed light (including UV and IR) in 94.2\% cases out of   all 90,951 or ${427 \choose 2}$ 
material pairs. The remaining 5.8\% of violations occur mostly when absorptances are nearly identical. 

\vspace{-0.05in}
\subsubsection{Validating IID Method}
\vspace{-0.05in}
We present evaluations on the color charts under different illuminations and JoLHT-Video dataset~\cite{joint_light_and_heat_transport}, and include in the supplementary (i) a simulation experiment to validate ordinality informativeness and (ii) an ablation study. 

\noindent\textbf{Color Chart under Different Illuminations.}
\label{sec:colorChart-lighting}
We imaged a color chart under white LED, incandescent and sunlight.
\autoref{tab:our_color_chart} shows our method outperforming baselines under all illuminations. 
Incandescent and sunlight experiments demonstrate our robustness to albedo variations even in the invisible band that influence absorbed light.
In contrast, physics-based baselines have limited applicability: JoLHT-Video assumes no invisible lighting component, and NIR-Priors requires NIR emission absent in white LEDs.

\noindent\textbf{Using JoLHT-Video Dataset.}
\label{sec:JoLHT-Video_data}
The dataset~\cite{joint_light_and_heat_transport} includes four color-chart scenes and a \emph{Painted-Mask} scene with pseudo ground-truth obtained following~\cite{MITIntrinsic}, which are considerably challenging due to the strong lighting variations from line light (\autoref{fig:wooden_block_comparison}, \autoref{fig:pseudo-gt_mask}). 
As shown in \autoref{tab:quantitative_merged}, our method outperforms all learning-based baselines without pre-trained priors, and achieves performance comparable to JoLHT-Video~\cite{joint_light_and_heat_transport}, which demands stricter conditions of calibrated transient thermal video and controlled lighting.


\begin{figure}[ttt]
 \captionsetup{position=top, skip=3pt}
 \centering
 
 \setlength{\tabcolsep}{2pt} 
 \begin{tabularx}{\linewidth}{@{}YYYY@{}}
   \includegraphics[width=\linewidth]{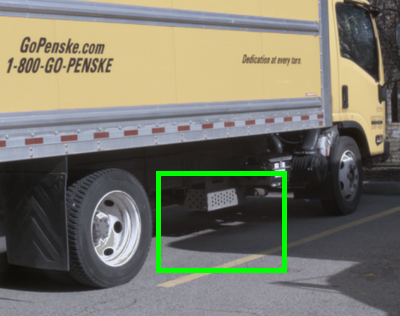} &
   \includegraphics[width=\linewidth]{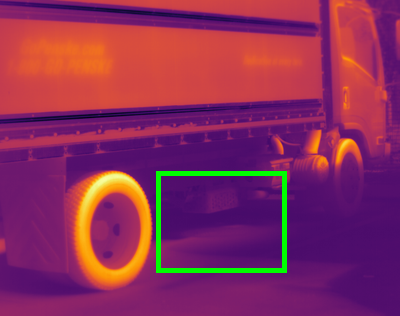} &
   \includegraphics[width=\linewidth]{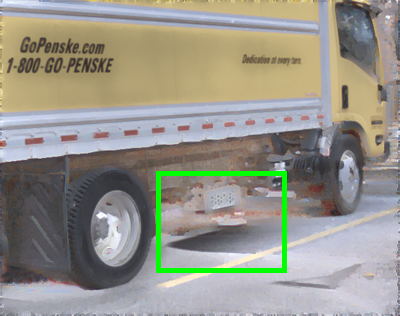} &
   \includegraphics[width=\linewidth]{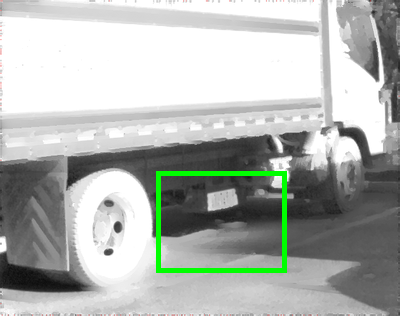} \\
   \includegraphics[width=\linewidth]{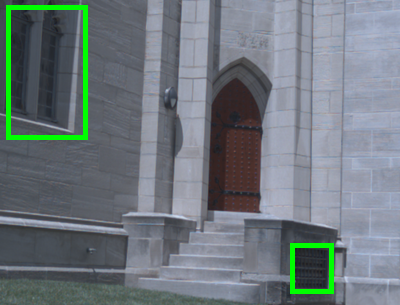} &
   \includegraphics[width=\linewidth]{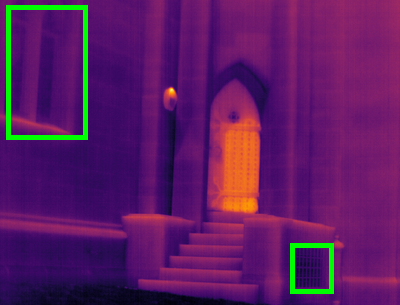} &
   \includegraphics[width=\linewidth]{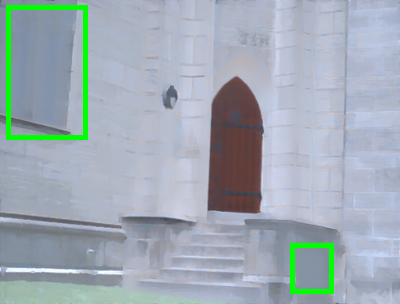} &
   \includegraphics[width=\linewidth]{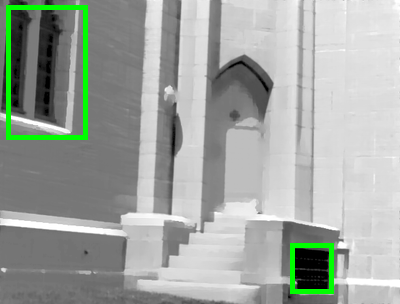} \\
   \includegraphics[width=\linewidth]{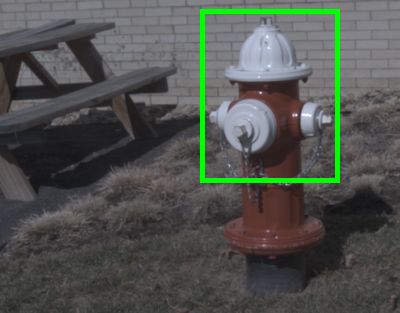} &
   \includegraphics[width=\linewidth]{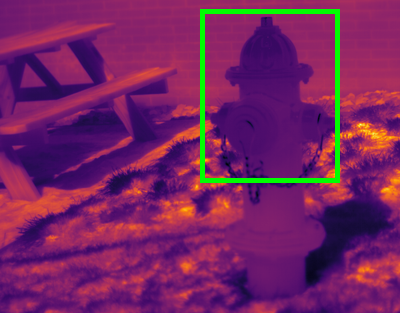} &
   \includegraphics[width=\linewidth]{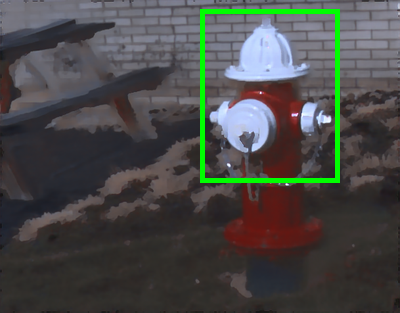} &
   \includegraphics[width=\linewidth]{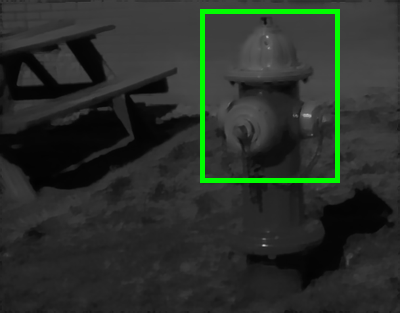} \\[-3pt]
   \small Visible & \small Thermal & \small Albedo & \small Shading 
 \end{tabularx}
 \vspace{-0.03in}
 \caption{ 
 Corner cases: 
 1) Heat from a truck engine, unrelated to light absorption, elevates the thermal intensity of the road beneath it. 
 2) Non-opaque cathedral windows violate the visible image formation model..
 3) The metallic fire hydrant has low emissivity, resulting in poor thermal SNR, and exhibits specular highlights that challenge the common Lambertian assumption in IID.
 }
 \label{fig:failure_cases}
 \vspace{-0.35in}
\end{figure}

\vspace{-0.08in}
\section{Limitations and Conclusion}
\vspace{-0.08in}
\label{sec:limitation_and_conclusion}
This work explores photometric cues encoded in a single auxiliary thermal image, and presents physics-based optimization for albedo-shading separation.
We showed its effectiveness on real scenes with a wide range of materials and lighting conditions. 
However, diffuse reflection dominates in these materials --- metals, transparent objects and mirrors violate the visible image formation model.
Our model also assumes that the heat arises primarily from light absorption --- heat generated otherwise internally (engines, humans) or externally (hot air blower or fire) is not modeled. 
It also does not handle multiple colored illuminations.
Finally, we rely on inexpensive microbolometer thermal cameras whose quality is lower compared to visible cameras --- low SNR due to insufficient heat generation (overcast skies, dynamic objects) can degrade performance. Failure cases are shown in \autoref{fig:failure_cases}, with additional analysis on low-light and non-equilibrium thermal conditions in the supplementary.
Despite these limitations, the improvements demonstrate the potential to scale supervision for learning algorithms. We hope our work inspires further exploration of light–heat interaction in computer vision and graphics. 

\section*{Acknowledgements}

This work was partly supported by NSF grants IIS210723, and NSF-NIFA AI Institute for Resilient Agriculture. We are sincerely grateful to Akihiko Oharazawa for his help with expert annotation, and to Sriram Narayanan and Gaurav Parmar for their insightful discussions.

{
    \small
    \bibliographystyle{ieeenat_fullname}
    \bibliography{main}
}

\clearpage
\setcounter{page}{1}
\setcounter{proposition}{0}
\maketitlesupplementary 
\newpage
\setcounter{section}{0}

\begin{table}[t]
\centering
\caption{Result of si-MSE ($\downarrow$) on simulated MIT-Intrinsic Dataset.}
\begin{tabular}{lcc}
\toprule
Method & Average Albedo & Average Shading \\
\midrule
Ours & $1.9\%$ & 0.5\% \\
\bottomrule
\end{tabular}
\label{tab:ours_mit_intrinsic}
\end{table}

\begin{figure}[ttt]
    \centering
    \begin{subfigure}{0.155\linewidth}
        \centering
        \includegraphics[width=\linewidth]{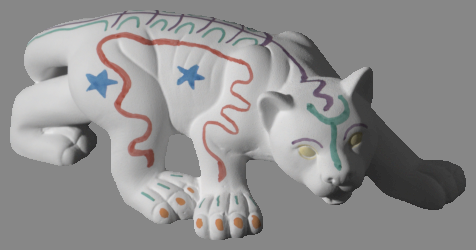}
        \includegraphics[width=\linewidth]{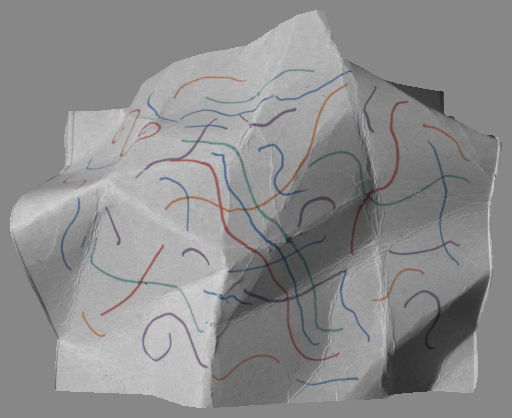}
        \caption*{\centering Input \\Visible}
    \end{subfigure}
    \hfill
    \begin{subfigure}{0.155\linewidth}
        \centering
        \includegraphics[width=\linewidth]{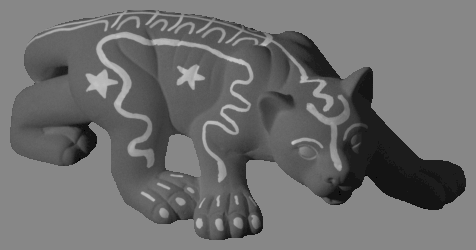}
        \includegraphics[width=\linewidth]{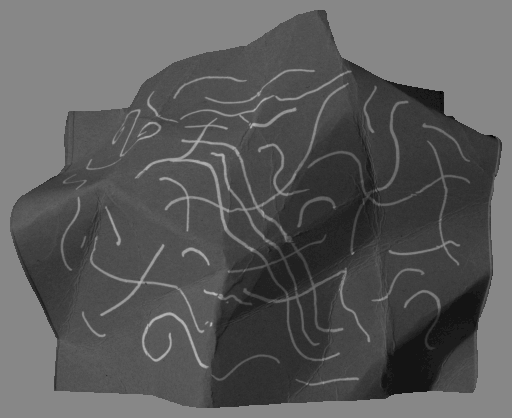}
        \caption*{\centering Simulated \\Thermal}
    \end{subfigure}
    \hfill
    \begin{subfigure}{0.155\linewidth}
        \centering
        \includegraphics[width=\linewidth]{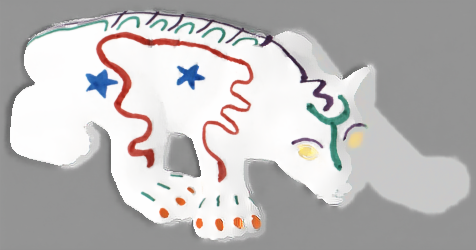}
        \includegraphics[width=\linewidth]{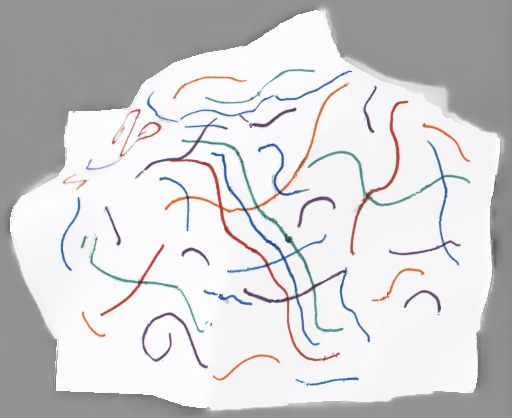}
        \caption*{\centering Our\\ Albedo}
    \end{subfigure}
    \hfill
    \begin{subfigure}{0.155\linewidth}
        \centering
        \includegraphics[width=\linewidth]{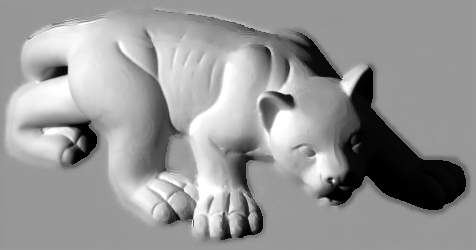}
        \includegraphics[width=\linewidth]{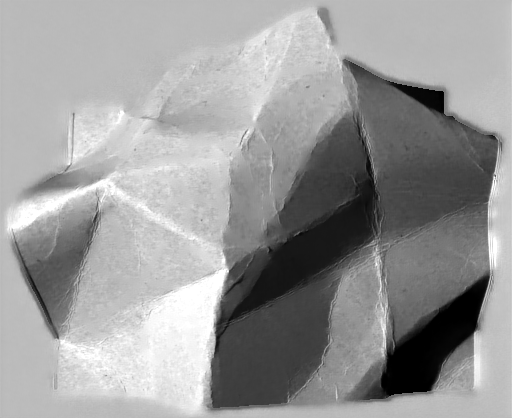}
        \caption*{\centering Our\\ Shading}
    \end{subfigure}
    \hfill
    \begin{subfigure}{0.155\linewidth}
        \centering
        \includegraphics[width=\linewidth]{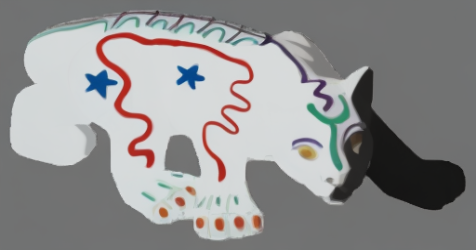}
        \includegraphics[width=\linewidth]{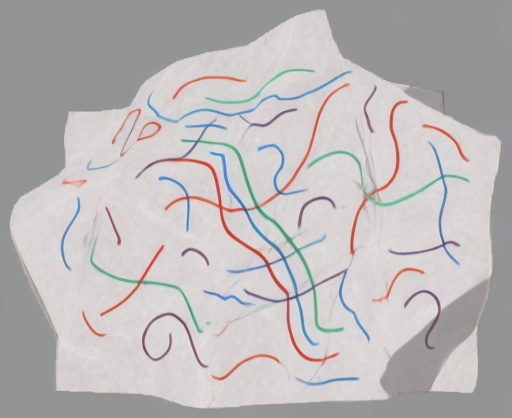}
        \caption*{\centering Baseline\\ Albedo}
    \end{subfigure}
    \hfill
    \begin{subfigure}{0.155\linewidth}
        \centering
        \includegraphics[width=\linewidth]{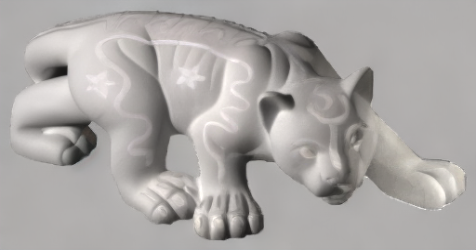}
        \includegraphics[width=\linewidth]{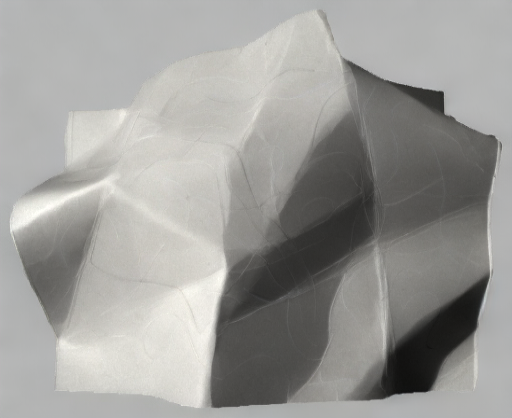}
        \caption*{\centering Baseline\\ Shading}
    \end{subfigure}
    \vspace{-5pt}
    \caption{Qualitative comparison with the best baseline on MIT-Intrinsic dataset~\cite{MITIntrinsic}, IntrinsicDiffusion~\cite{intrinsicDiffusion}
    . Our method excels without learning priors by leveraging the simulated thermal image.}
    \label{fig:qualitative_table_combined}
\end{figure}

\section{Simulated MIT-Intrinsic Dataset: Investigating the Informativeness of Ordinality}
\label{sec:mit-intrinsic}
The effectiveness of our method depends on two orthogonal factors: (1) the correctness of albedo–shading ordinalities derived from visible–thermal pairs, determined by imaging quality and the robustness of our theory to variations in physical properties, and (2) the informativeness of such ordinalities for intrinsic decomposition.
With (1) validated in \autoref{sec:expert_validation}, we isolate (2) by generating ideal absorbed-light images as thermal input using pseudo ground truth from the MIT-Intrinsics dataset~\cite{MITIntrinsic} via \autoref{eq:heat_gen}.
\autoref{tab:ours_mit_intrinsic} shows that a single ideal thermal image achieves a low average si-MSE on both albedo and shading.
\section{Ablation on Loss Terms and DDIP}
We ablated the loss functions and Double-DIP parameterization using pseudo ground truth from the \emph{Painted Mask} scene.
\autoref{tab:ablate_loss} shows
that combining ordinal, edge, and reconstruction losses with Double-DIP yields the best result.

\begin{table}[h]
\small
    \renewcommand{\arraystretch}{1.0}
    \centering
    \caption{Ablation study on loss terms and Double-DIP parameterization. We report the si-MSE for the \emph{Painted Mask} scene.}
    \vspace*{-0.1in}
    \label{tab:ablate_loss}
    \begin{tabular}{@{}cccccc@{}}
    \toprule
    \textbf{\(\mathcal{L}_{\text{recon}}\)} & \textbf{\(\mathcal{L}_{\text{edge}}\)} & \textbf{\(\mathcal{L}_{\text{ord}}\)} & \textbf{DDIP} & \textbf{Albedo $\downarrow$} & \textbf{Shading $\downarrow$} \\
    \midrule
    \ding{51} & \ding{51} & \ding{51} &\ding{51} &$\mathbf{1.1 \times 10^{-1}}$ & $\mathbf{9.7 \times 10^{-4}}$ \\
    \ding{51} & \ding{51} & \ding{51} &\ding{55} &$1.6 \times 10^{-1}$ & $32 \times 10^{-4}$ \\
    \ding{51} & \ding{51} & \ding{55}          &\ding{51} &$2.2 \times 10^{-1}$ & $18 \times 10^{-4}$ \\
    \ding{51} & \ding{55}          & \ding{51} &\ding{51} & $ 2.0\times 10^{-1}$ & $ 13\times 10^{-4}$ \\
    \ding{55}          & \ding{51} & \ding{51} &\ding{51} &$4.0 \times 10^{-1}$ & $79 \times 10^{-4}$ \\
    \ding{51} & \ding{55}          & \ding{55}          &\ding{51} &$3.3 \times 10^{-1}$ & $22 \times 10^{-4}$ \\
    \bottomrule
    \end{tabular}
\end{table}

\section{Proof for Propositions}
\begin{proposition}
Given two pixels with visible and heat intensities as in 
\text{Eq.\,6},
if $\mathcal{H}(x_i) > \mathcal{H}(x_j)$ and $I_v(x_i) < I_v(x_j)$, then $\rho(x_i) < \rho(x_j)$, and vice versa. 
\label{thm:albedo_ordinal}
\end{proposition}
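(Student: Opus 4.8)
The plan is to eliminate the unmeasured shading $\eta$ from the four relations in \eqref{eq:point_pair} by forming, at each pixel, the ratio of heat intensity to visible intensity; this collapses the claim to a one-variable monotonicity statement in the albedo $\rho$. Before dividing, I would record the implicit physical assumptions that keep every quantity well-behaved: the global gain satisfies $g>0$, each illuminated point has strictly positive shading $\eta(x)>0$, and the albedo lies in $\rho(x)\in(0,1)$ so that $\rho$ and $1-\rho$ are both positive. Under these, $I_v(x_i),I_v(x_j),S(x_i),S(x_j)$ are all strictly positive and the ratios below are defined and sign-preserving.

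The crux is the identity
\begin{equation}
\frac{S(x)}{I_v(x)}=\frac{(1-\rho(x))\,\eta(x)}{g\,\rho(x)\,\eta(x)}=\frac{1}{g}\left(\frac{1}{\rho(x)}-1\right),
\label{eq:ratio_sketch}
\end{equation}
in which $\eta(x)$ cancels identically, leaving a strictly decreasing function of $\rho(x)$ on $(0,1)$. The two hypotheses then combine multiplicatively: since $S(x_i)>S(x_j)>0$ while $0<I_v(x_i)<I_v(x_j)$, the larger numerator paired with the larger reciprocal denominator gives $S(x_i)/I_v(x_i)>S(x_j)/I_v(x_j)$. Substituting \eqref{eq:ratio_sketch} and cancelling $g>0$ yields $1/\rho(x_i)>1/\rho(x_j)$, hence $\rho(x_i)<\rho(x_j)$. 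The converse (\emph{vice versa}) follows by reversing every inequality, equivalently by swapping the indices $i$ and $j$.

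I expect no deep obstacle here; the only care required is the sign bookkeeping and the observation that it is the \emph{joint} use of both hypotheses that pins down $\rho$ --- knowing the ordering of $S$ alone, or of $I_v$ alone, is compatible with either ordering of $\rho$, so the argument must use $S$ and $I_v$ together. As a division-free cross-check of the boundary behavior as $\rho\to0$ or $\rho\to1$, I would also run the contrapositive: assuming $\rho(x_i)\ge\rho(x_j)$, the visible inequality forces $\eta(x_i)<\eta(x_j)$, whence $(1-\rho(x_i))\,\eta(x_i)\le(1-\rho(x_j))\,\eta(x_j)$, contradicting $S(x_i)>S(x_j)$. This alternative makes the roles of $\rho(x),\eta(x)>0$ fully explicit and confirms the strictness of the conclusion.
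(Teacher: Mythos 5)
Your proposal is correct and follows essentially the same route as the paper: the paper divides the heat-intensity inequality by the visible-intensity inequality, which is exactly your ratio $S(x)/I_v(x) = \frac{1}{g}\left(\frac{1}{\rho(x)}-1\right)$ cancelling $\eta$ and decreasing in $\rho$. Your added positivity bookkeeping and contrapositive cross-check are sound but do not change the argument.
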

\begin{proof}
Given
\begin{align}
    (1 - \rho(x_i)) \eta(x_i) >& (1-\rho(x_j))\eta(x_j) \\
    g\rho(x_i)\eta(x_i) <& g\rho(x_j) \eta(x_j).
\end{align}
Dividing the first eq. by the second and noting that all terms are positive, we get
\begin{align}
    \frac{1 - \rho(x_i)}{g\rho(x_i)} &> \frac{1 - \rho(x_j)}{g\rho(x_j)} 
\implies \rho(x_i) < \rho(x_j)
\end{align}
Proof for the complement is omitted for brevity.
\end{proof}

\begin{proposition}
Given two pixels with visible and heat intensities as in 
\text{Eq.\,6},
if 
$I_v(x_i) < I_v(x_j)$ and $\mathcal{H}(x_i) < \mathcal{H}(x_j)$, then $\eta(x_i) < \eta(x_j)$, and vice versa.
\label{thm:shading_ordinal}
\end{proposition}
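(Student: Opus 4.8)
The plan is to reduce the claim to an additive identity, in contrast to the multiplicative/ratio argument used for Proposition~\ref{thm:albedo_ordinal}. The key observation is that, whereas $I_v$ and $S$ are each products of albedo and shading (see~\eqref{eq:point_pair}), an appropriately weighted sum isolates shading alone. Concretely, dividing the visible equation by the positive gain $g$ and adding it to the heat-source equation causes the albedo-dependent terms to telescope,
\begin{equation}
    \tfrac{1}{g}\, I_v + S = \rho\,\eta + (1-\rho)\,\eta = \eta,
\end{equation}
so that $\eta(x) = \tfrac{1}{g} I_v(x) + S(x)$ at every pixel. This converts a statement about a product into a statement about a sum, which is what makes the monotonicity transfer go through cleanly.

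With this identity in hand, the argument is a two-line comparison. First I would use $g>0$ to convert the hypothesis $I_v(x_i) < I_v(x_j)$ into $\tfrac{1}{g} I_v(x_i) < \tfrac{1}{g} I_v(x_j)$, which preserves the strict inequality. Second, I would take the other hypothesis $S(x_i) < S(x_j)$ as given. Adding these two strict inequalities term by term and substituting the identity above yields $\eta(x_i) < \eta(x_j)$, as required. The reverse implication (``vice versa'') follows by applying the identical addition to the reversed inequalities, or simply by the symmetry of swapping the roles of $x_i$ and $x_j$.

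Unlike the albedo case, there is essentially no obstacle here: no division by an albedo term and no need to analyze the sign or monotonicity of a rational function of $\rho$. The only points requiring care are bookkeeping ones—ensuring $g>0$ so that scaling preserves the inequality direction, and noting that the sum of two strict inequalities is itself strict, so no degenerate equality case can intrude. If anything, the subtlety is conceptual rather than technical: recognizing that the correct combination is the sum $\tfrac{1}{g} I_v + S$ rather than any ratio, since this is precisely the combination that recovers shading while discarding all albedo information.
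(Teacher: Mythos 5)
Your proposal is correct and follows essentially the same route as the paper: scale the visible inequality by $1/g$, add it to the heat-source inequality, and use the identity $\tfrac{1}{g}I_v + S = \rho\eta + (1-\rho)\eta = \eta$ to conclude. The only difference is expository emphasis; the mathematical content is identical.
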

\begin{proof}
Since multiplying an inequality by a positive scalar and adding two inequalities of same order preserves the order, we have
\begin{equation}
    \frac{I_v(x_i)}{g} + \mathcal{H}(x_i) < \frac{I_v(x_j)}{g} + \mathcal{H}(x_j).
    \label{eq:sum_S_I}
\end{equation}
From 
Eq.\,1 and Eq.\,2, 
note that $\frac{I_v(x)}{g} + \mathcal{H}(x) = \eta(x)$.
Substituting in 
Eq.\,24,
we can see that
\begin{equation}
    \eta(x_i) < \eta(x_j)
\end{equation}
Proof for the complement is omitted for brevity.
\end{proof}

\begin{figure}[H]
    \centering
    \begin{subfigure}{0.48\linewidth}
        \centering \includegraphics[width=\linewidth]{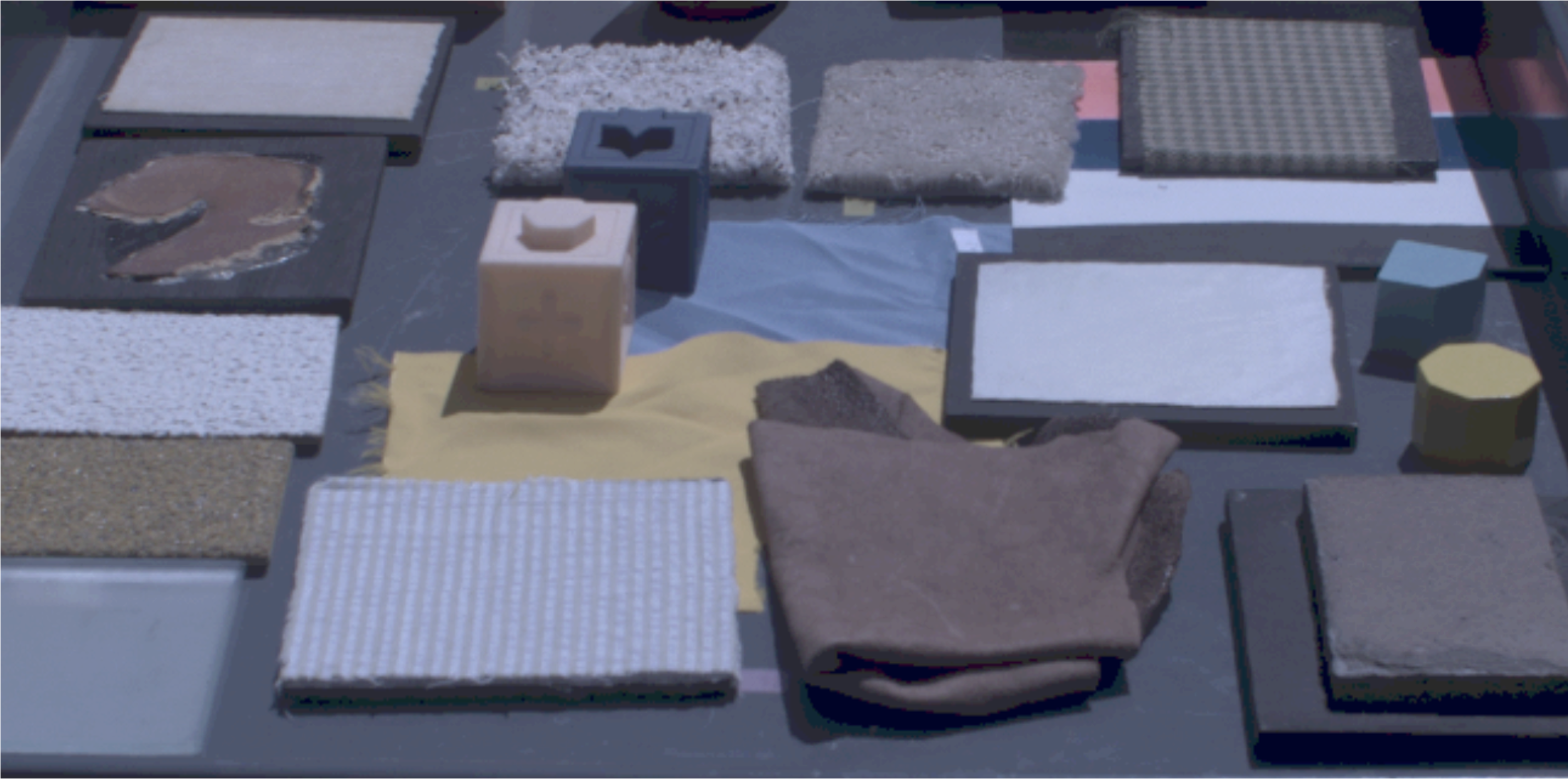}
        \caption*{Visible Image under Sunlight}
    \end{subfigure}
    \hfill
    \begin{subfigure}{0.48\linewidth}
        \centering \includegraphics[width=\linewidth]{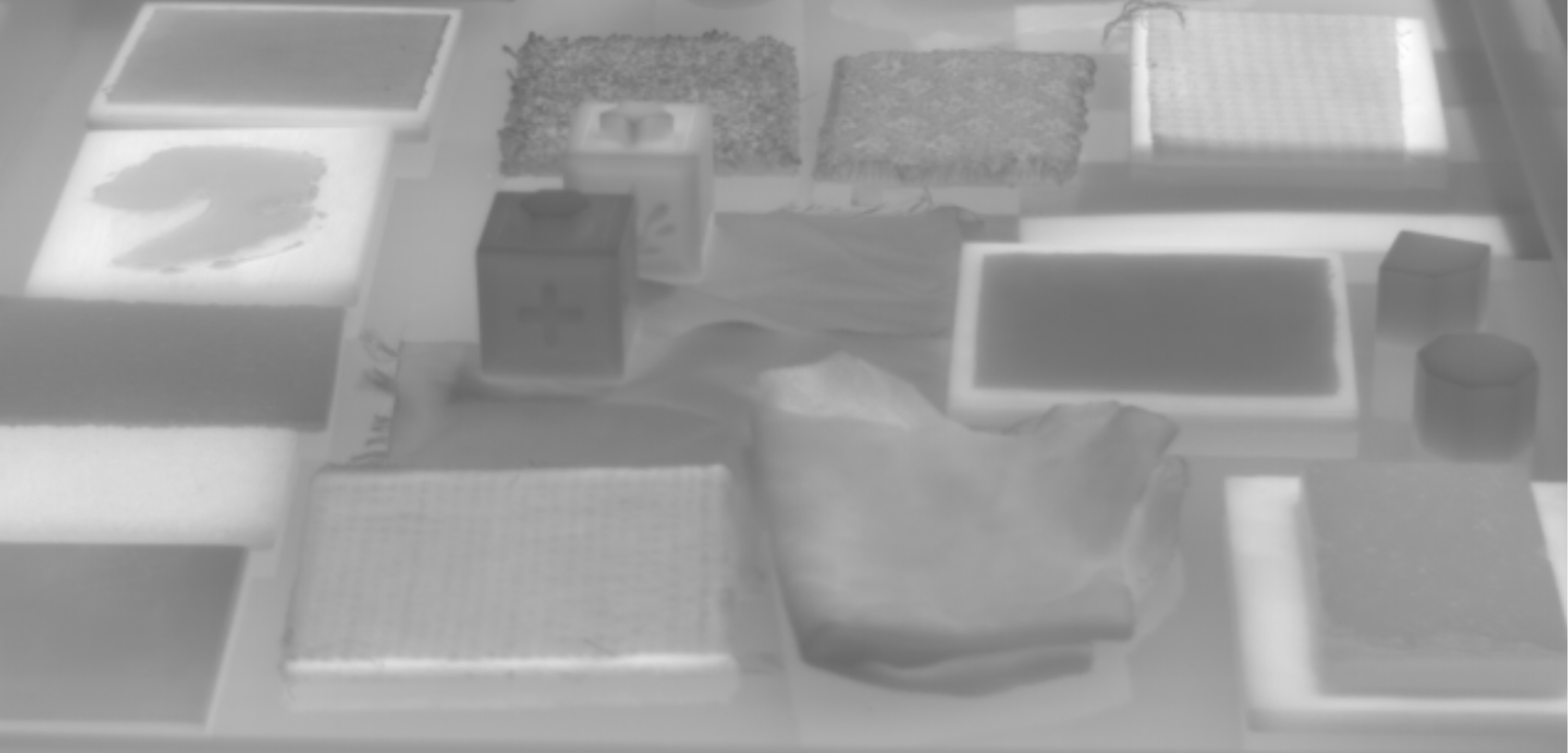}
        \caption*{Thermal Image under Sunlight}
    \end{subfigure}
    \\
    \begin{subfigure}{0.48\linewidth}
        \centering \includegraphics[width=\linewidth]{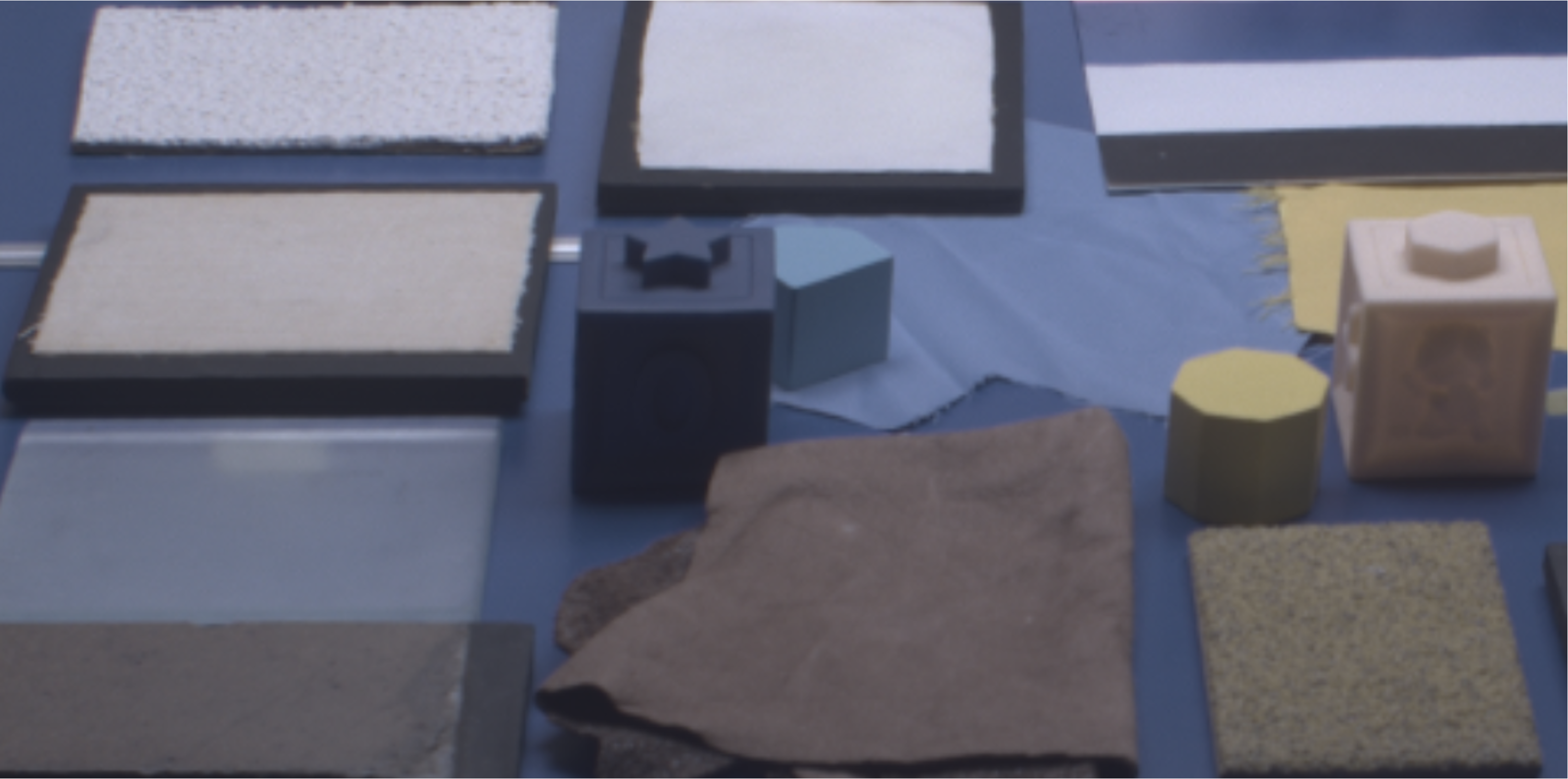}
        \caption*{Visible Image under White LED}
    \end{subfigure}
    \hfill
    \begin{subfigure}{0.48\linewidth}
        \centering \includegraphics[width=\linewidth]{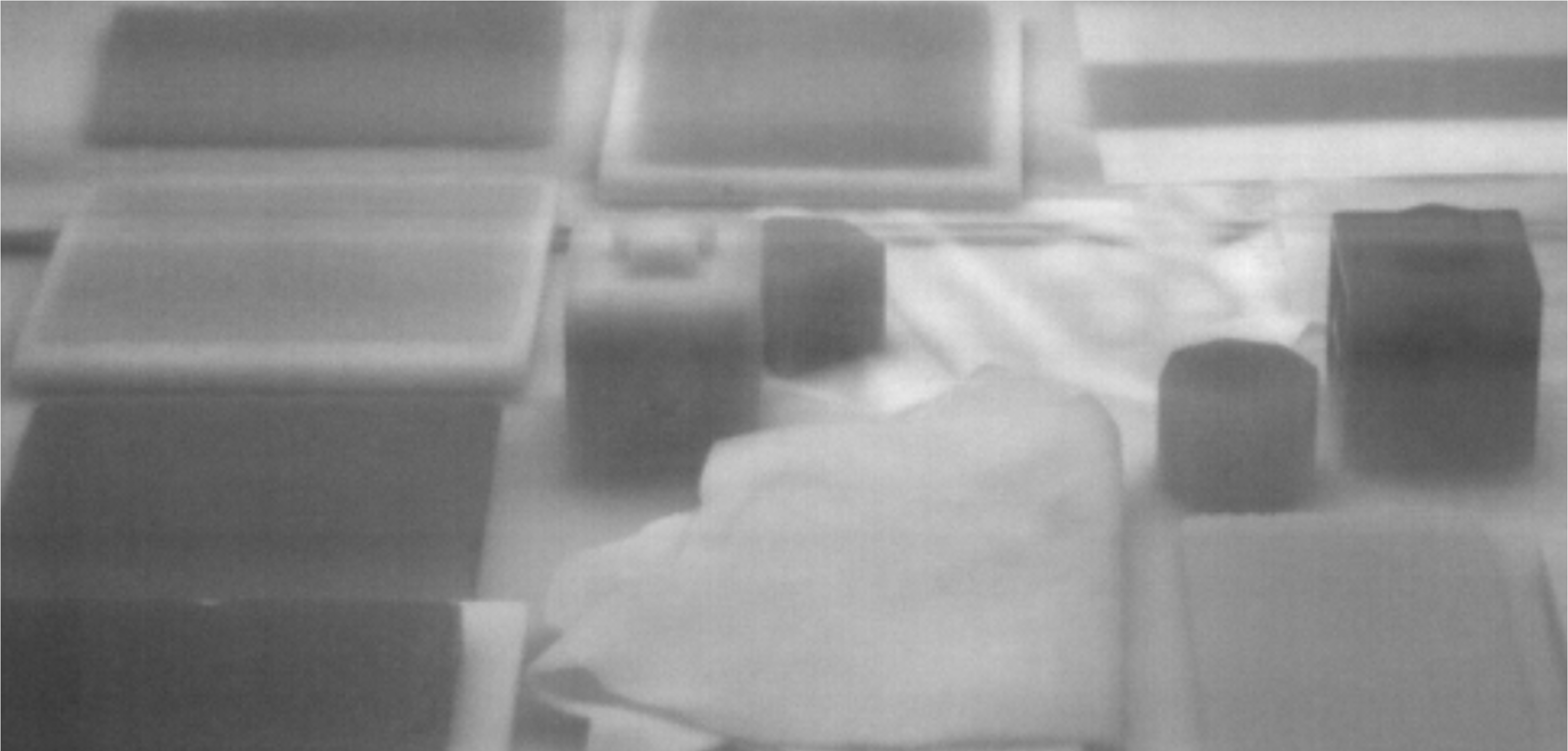}
        \caption*{Thermal Image under White LED}
    \end{subfigure}
    \vspace{-0.1in}
    \caption{An example scene in expert validation on ordinalities across diverse materials 
    (Sec.\,3.5).
    }
    \label{fig:material_validation}
    \vspace*{-0.15in}
\end{figure}

\begin{table}[ht]
\centering
\caption{The 20 materials used in ordinality validation.}
\label{tab:material_list}
\begin{tabular}{ll}
1. Terrycloth        & 11. Orange peel \\
2. Plaster           & 12. Wooden block \\
3. Felt              & 13. Yellow silk \\
4. Cork              & 14. Blue silk \\
5. Frosted glass     & 15. Painted aluminum can \\
6. Sponge            & 16. Painted metal handcart \\
7. Carpet            & 17. Plastic board w/ black paint \\
8. White leather     & 18. Plastic board w/ white paint \\
9. Brick             & 19. Beige rubber block \\
10. Suede leather    & 20. Blue rubber block \\
\end{tabular}
\end{table}

\section{Ordinality Validation on Diverse Materials}
To further examine how material properties affect the validity of our theory and assumptions, we created scenes with 20 common material samples from CUReT dataset~\cite{curet_dataset} and daily objects under sunlight and white-LED, and conducted expert validation on albedo-shading ordinalities. The detailed list of 20 materials used is shown in \autoref{tab:material_list}.

\section{Additional Limitation Analysis}
The key limitations of our method arise when the relationship between the absorbed heat from light ($S$) and the thermal image intensity ($I_t$) is violated, which can be summarized by three categories: external heat generation, non-opaque surfaces, and low signal-to-noise ratio (SNR).
\autoref{fig:failure_cases} shows representative failure cases.

\subsection{Low SNR in Low-Light Conditions}
To investigate how thermal image SNR is influenced by low-light condition, we captured visible-thermal image pairs of a color chart under an incandescent light at different distance. We measured illuminance using a light meter and computed si-MSE on the albedo decomposed by our method. As shown in \autoref{fig:si-MSE_vs_illuminance}, si-MSE decreases with increasing illuminance, indicating improved thermal SNR and decomposition quality under higher illumination. For reference, direct sunlight reaches about 100,000 lux, while overcast daylight is around 6,000 lux~\cite{Illuminance_overcast_sky}.

\vspace{-0.1in}
\begin{figure}[!ht]
    \centering
    \includegraphics[width=\linewidth]{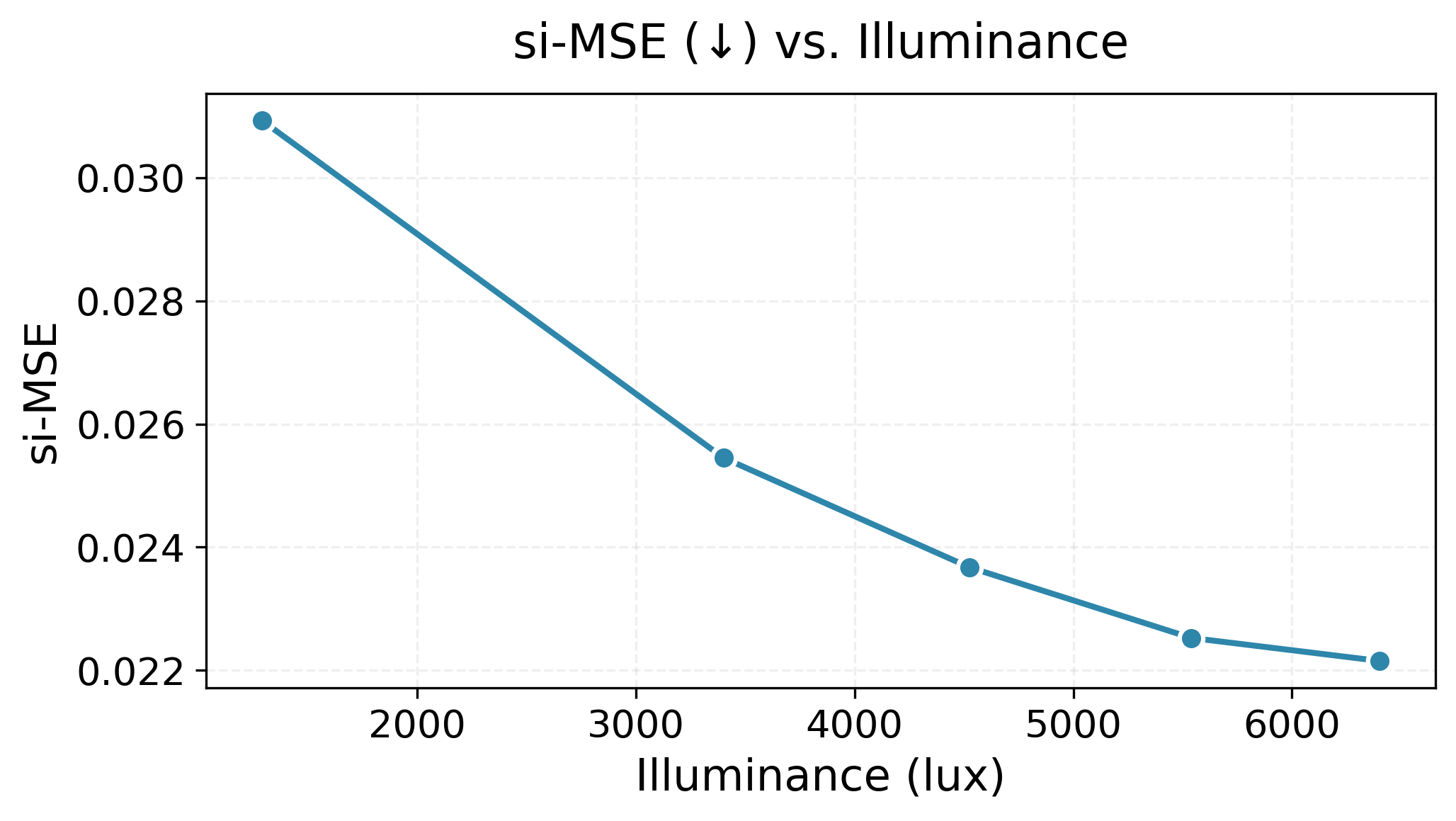}
    \vspace{-0.1in}
    \caption{Effect of illuminance on albedo decomposition accuracy. The albedo si-MSE ($\downarrow$) decreases as illuminance increases (incandescent bulb at varying distances), reflecting improved thermal SNR under stronger illumination. For reference, bright sunlight reaches 111,000 lux, while overcast daylight is typically 1,000–2,000 lux.}
    \label{fig:si-MSE_vs_illuminance}
    \vspace{-0.1in}
\end{figure}

\subsection{Non-Equilibrium Thermal Conditions}
\autoref{fig:failure_cases} shows a failure case of a truck where a heat source (engine) disturbs the thermal equilibrium. However, our approach can tolerate small deviations from thermal equilibrium since it relies on relative intensities (ordinalities) rather than absolute intensities. 
Theoretically, thermal transient exponentially converges to equilibrium~\cite{joint_light_and_heat_transport}, consequently, the observed thermal contrast follows this trend. 
Empirically, this tolerance can be evaluated on the JoLHT-Video dataset since all scenes were captured with a strong nearby source switched on at the beginning of videos, introducing a disturbance more severe than most natural settings. 
Please note that this disturbance is more severe than typical natural variations.
We ran our method at 10, 20, and 30 seconds, achieving 72.3\%, 79.0\%, 89.4\%, respectively, of the steady-state performance at 60s.

\section{Additional Method Details}
For point-pair sampling, the first point is sampled uniformly over the image. The second point is sampled at a random angle in $[0, 2\pi]$ and a random distance in $[0, 0.2]$ times the image diagonal from the first. Points outside the image are mirrored back.

\section{Runtime Analysis}
The optimization generally converges within 5000 iterations with Double-DIP parameterization and 500 without it. 
The average runtime for 5000 vs.\ 500 iterations is \text{48.22 s} vs.\ \text{5.63 s}, evaluated on the data from~\cite{joint_light_and_heat_transport}. 
The measurements were obtained by running five parallel jobs on a single GeForce RTX 4090 GPU and normalizing the runtime per job. 
In challenging cases, increasing the number of iterations can further improve the results.

\section{Additional Qualitative Results}
We provide additional qualitative results on \dataset dataset in comparison with state-of-the-art baselines. 
Each case shows visible input with albedo estimations above and thermal with shading below.
Images are tonemapped / colormapped for visualization. 

\begin{figure*}[t]
    \centering
    \includegraphics[width=1\linewidth]{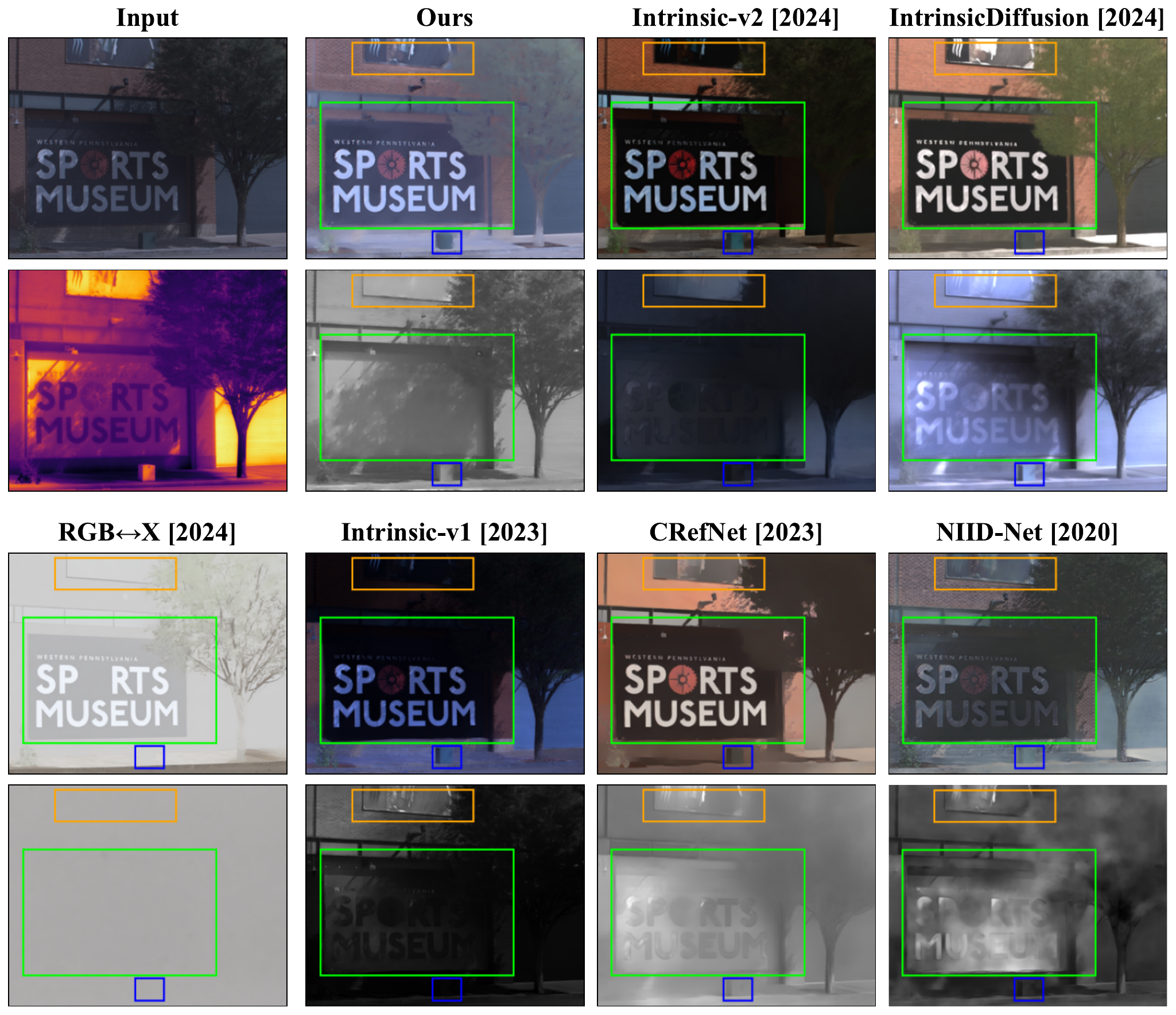}\\
    \vspace{-0.10in}
    \caption{
    Qualitative comparisons to state-of-the-art baselines. 
    Visible input with albedo estimations are shown above and thermal with shading below.
    }
\end{figure*}

\begin{figure*}[t]
    \centering
    \includegraphics[width=1\linewidth]{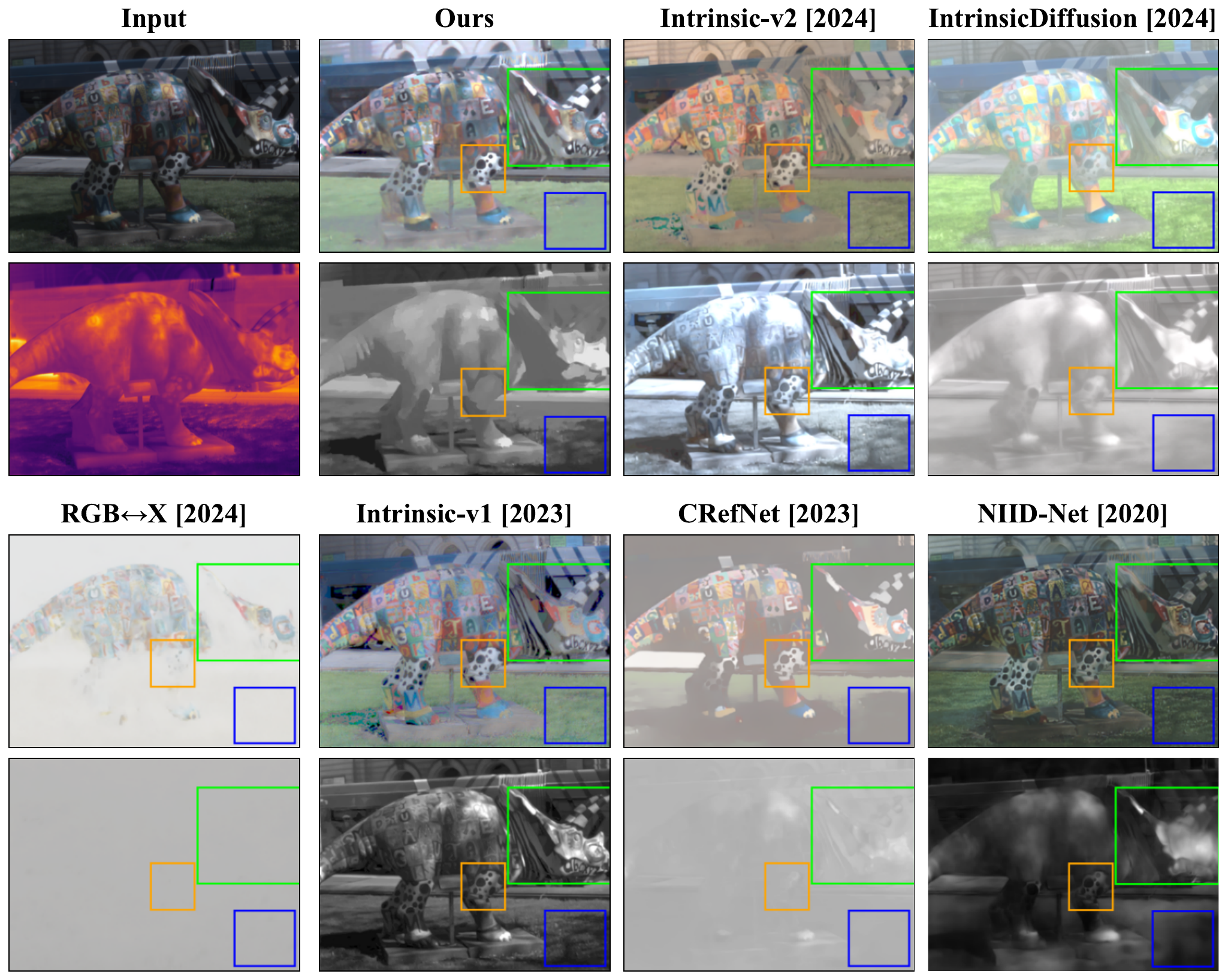}\\
    \vspace{-0.10in}
    \caption{
    Qualitative comparisons to state-of-the-art baselines.
    }
\end{figure*}

\begin{figure*}[t]
    \centering
    \includegraphics[width=1\linewidth]{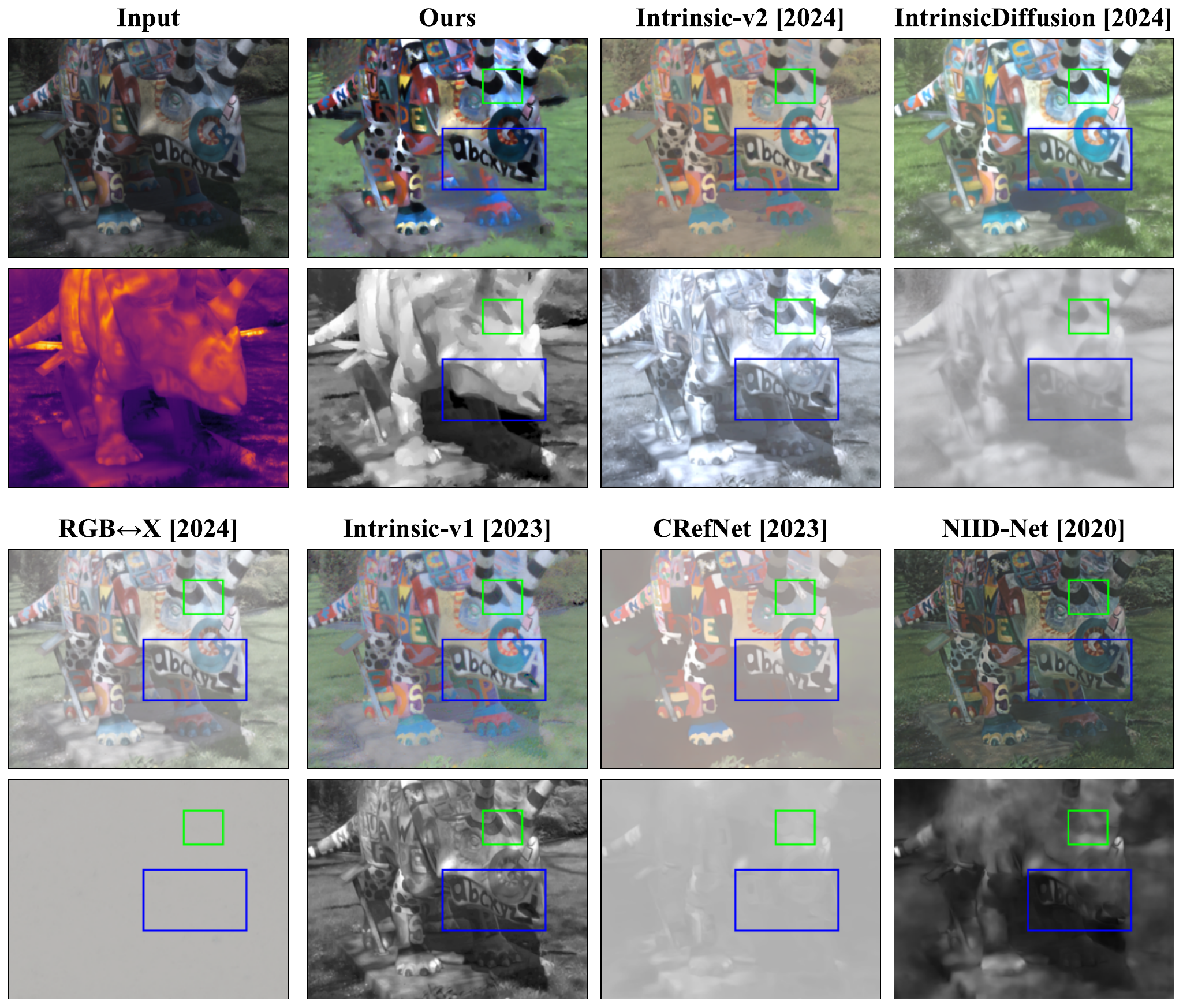}\\
    \vspace{-0.10in}
    \caption{
    Qualitative comparisons to state-of-the-art baselines.
    }
\end{figure*}

\begin{figure*}[t]
    \centering
    \includegraphics[width=1\linewidth]{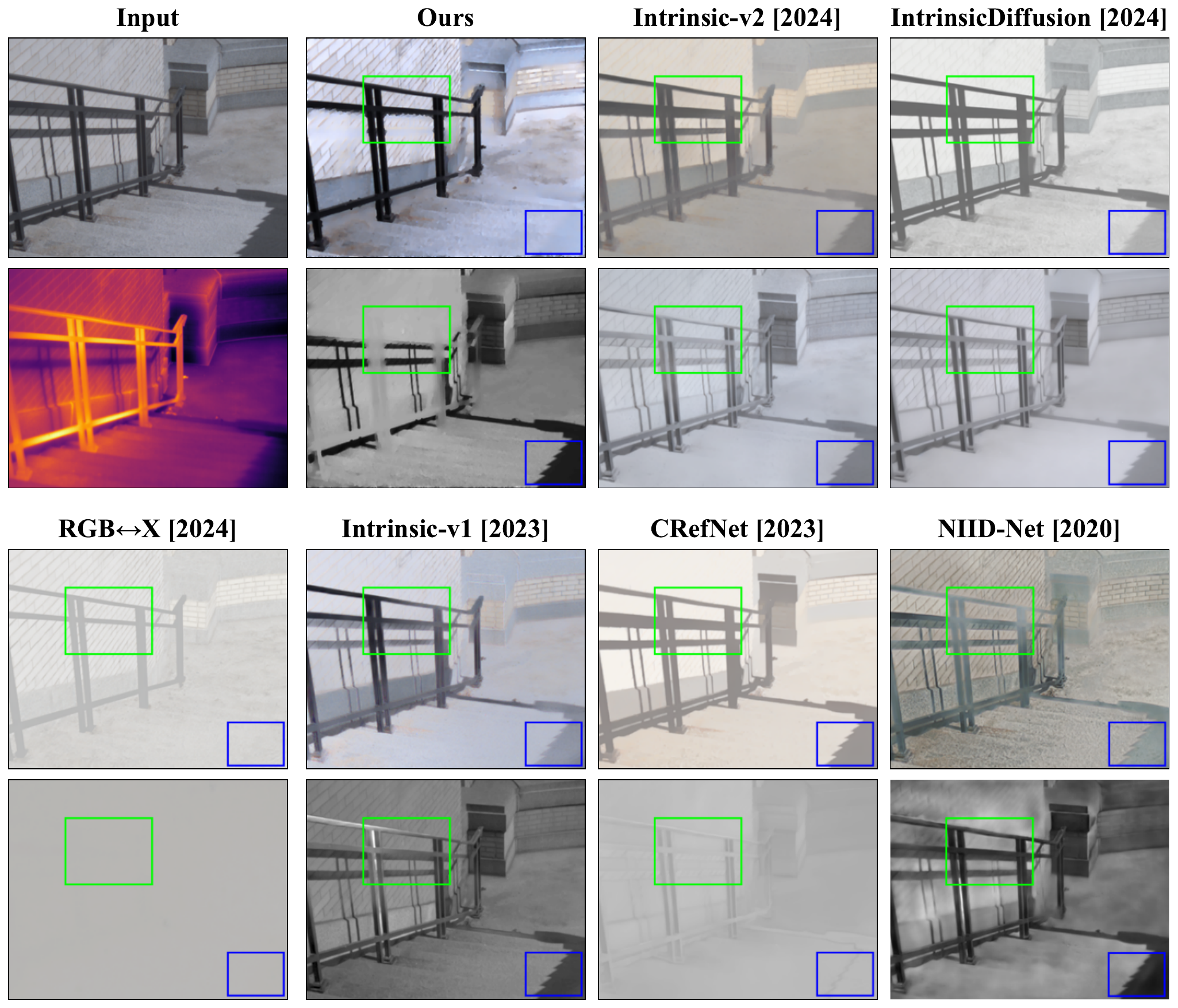}\\
    \vspace{-0.10in}
    \caption{
    Qualitative comparisons to state-of-the-art baselines.
    }
\end{figure*}

\begin{figure*}[t]
    \centering
    \includegraphics[width=1\linewidth]{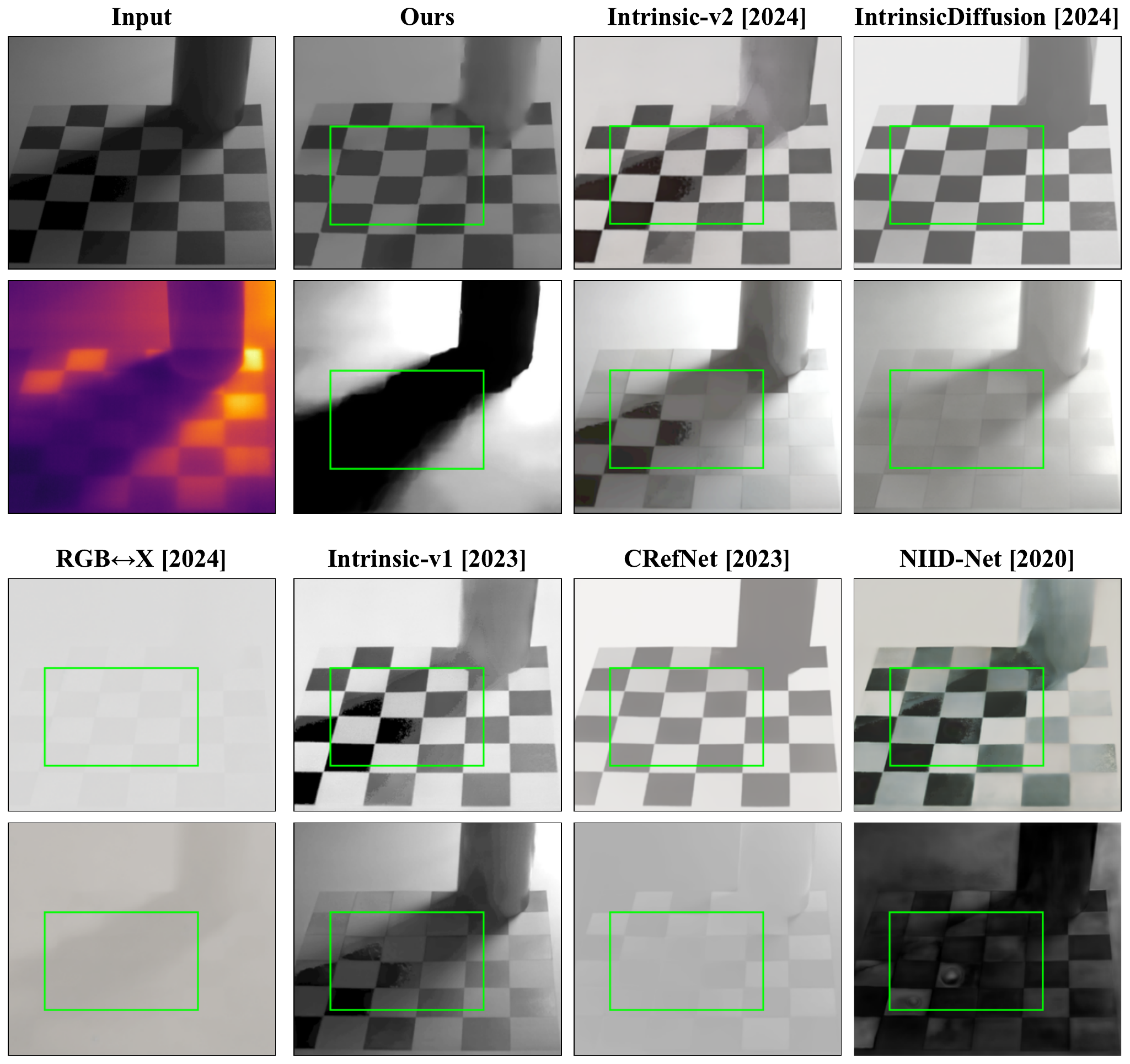}\\
    \vspace{-0.10in}
    \caption{
    Qualitative comparisons to state-of-the-art baselines.
    }
\end{figure*}

\begin{figure*}[t]
    \centering
    \includegraphics[width=1\linewidth]{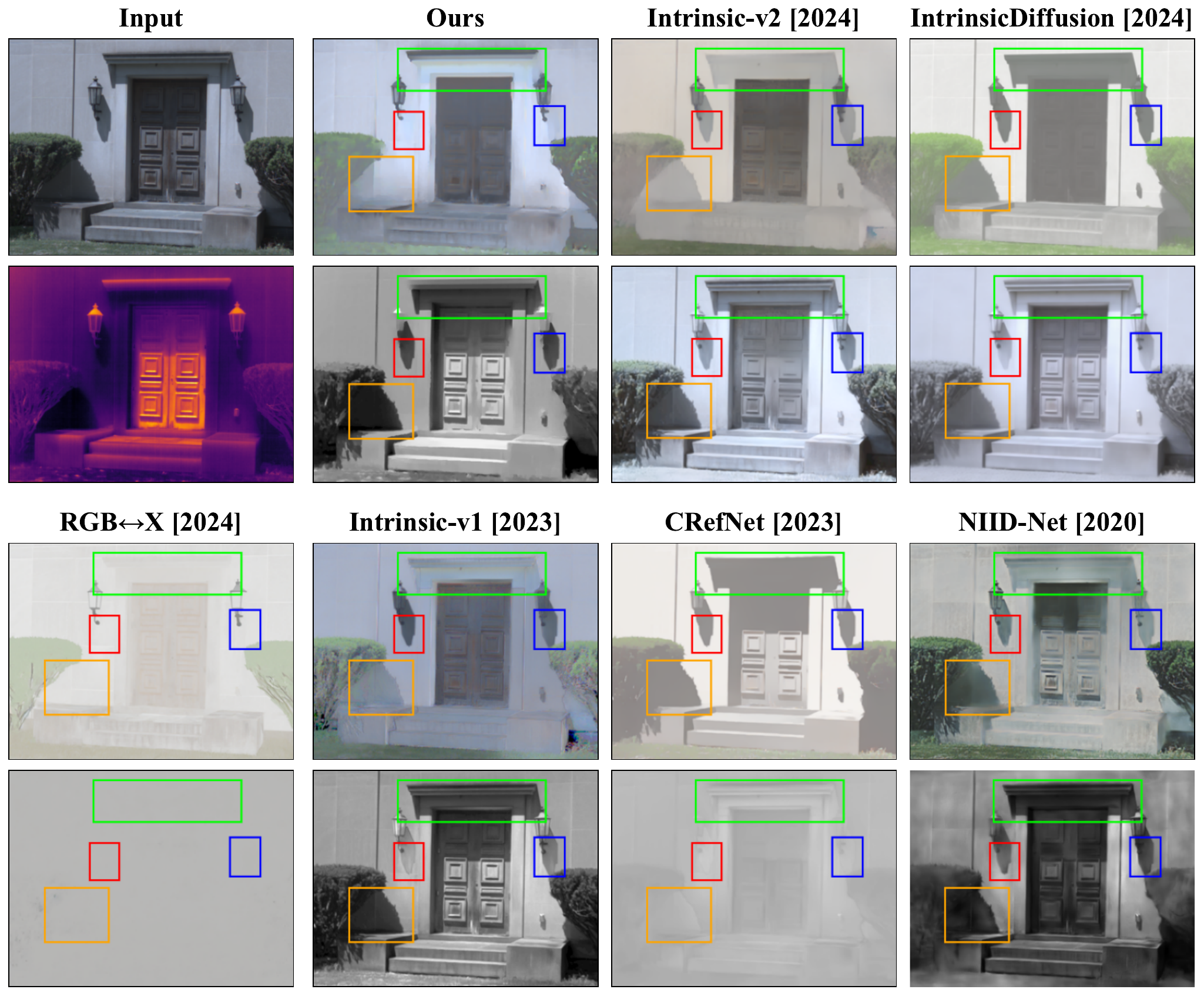}\\
    \vspace{-0.10in}
    \caption{
    Qualitative comparisons to state-of-the-art baselines.
    }
\end{figure*}

\begin{figure*}[t]
    \centering
    \includegraphics[width=1\linewidth]{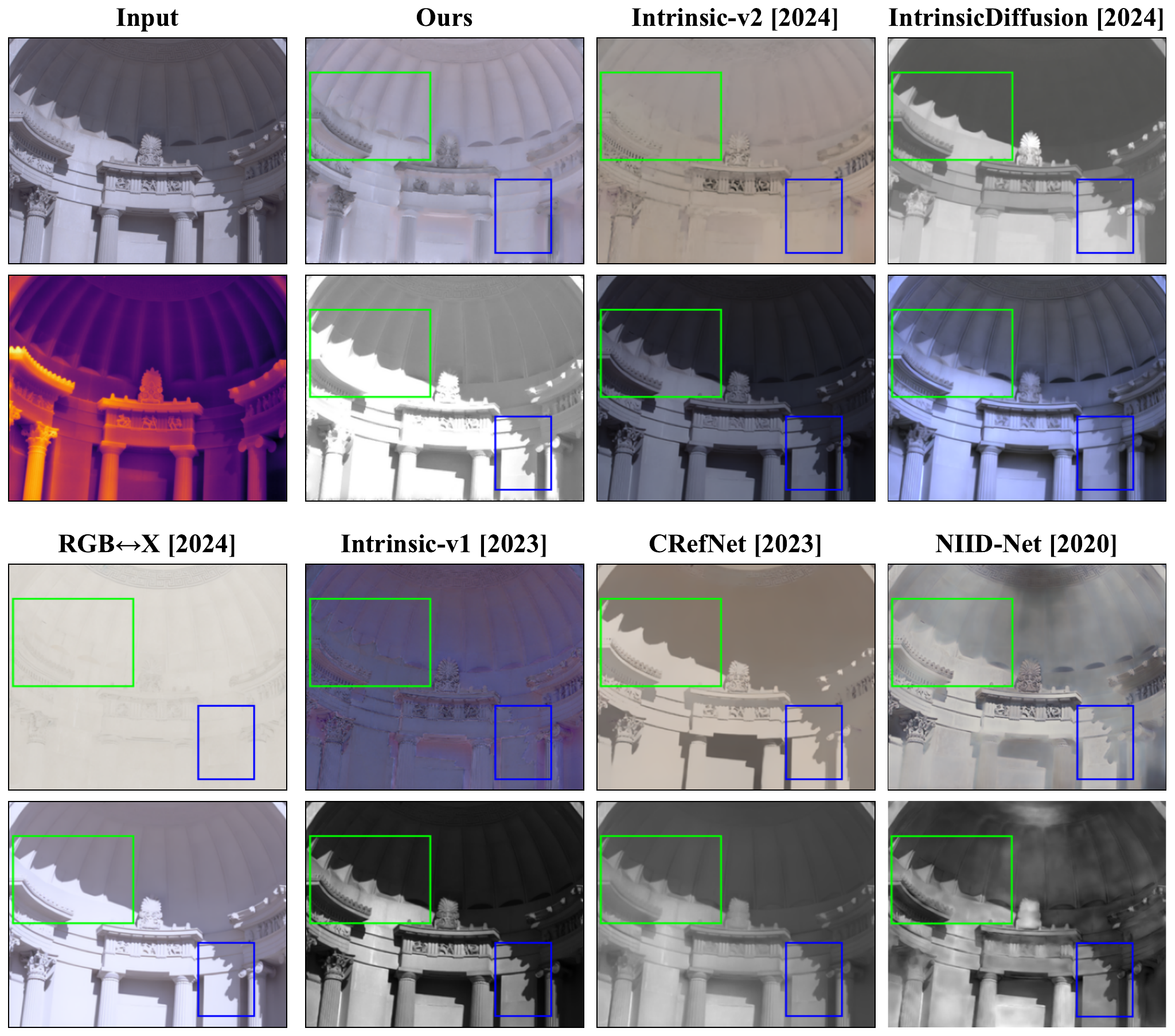}\\
    \vspace{-0.10in}
    \caption{
    Qualitative comparisons to state-of-the-art baselines.
    }
\end{figure*}

\begin{figure*}[t]
    \centering
    \includegraphics[width=1\linewidth]{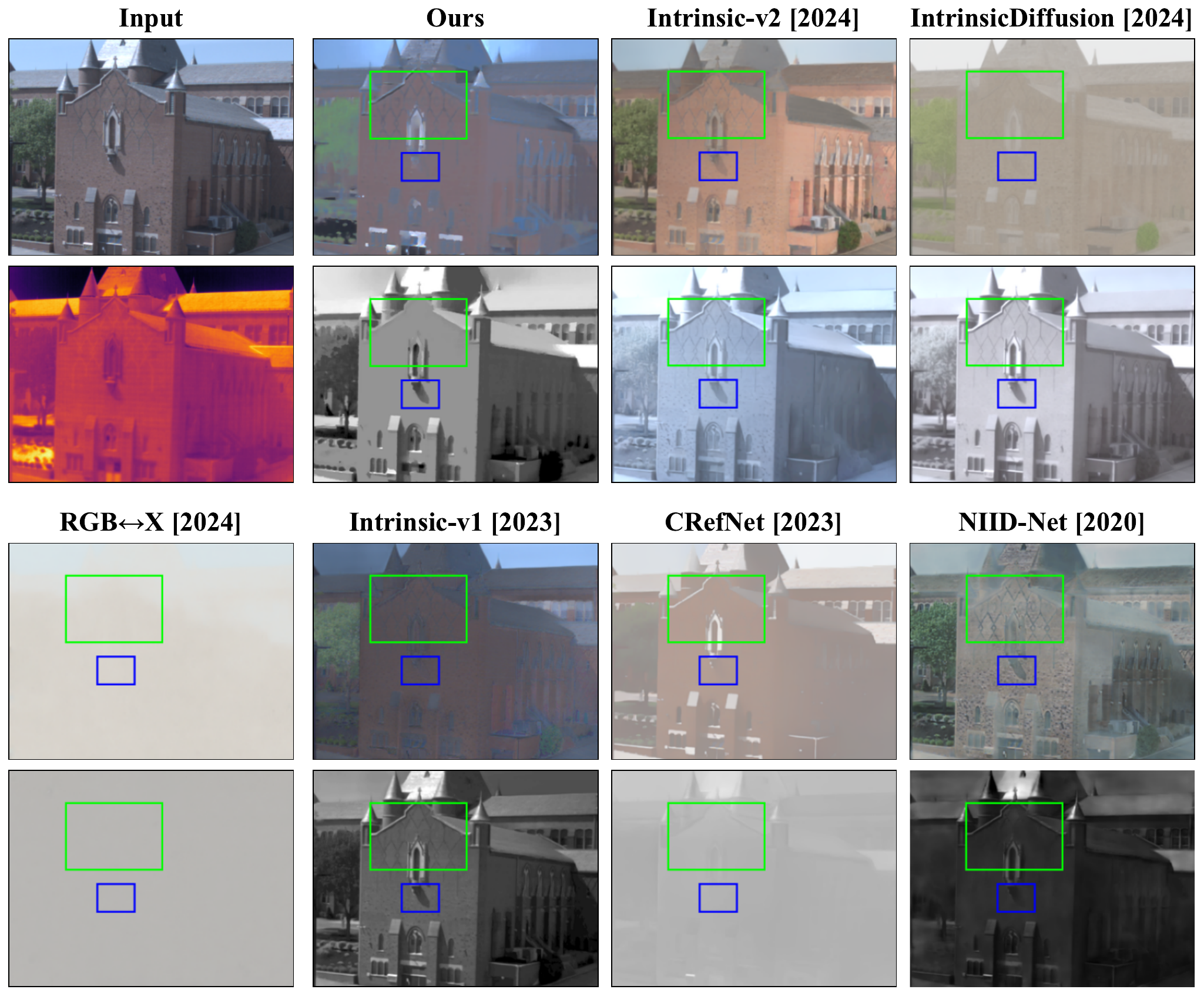}\\
    \vspace{-0.10in}
    \caption{
    Qualitative comparisons to state-of-the-art baselines.
    }
\end{figure*}

\begin{figure*}[t]
    \centering
    \includegraphics[width=1\linewidth]{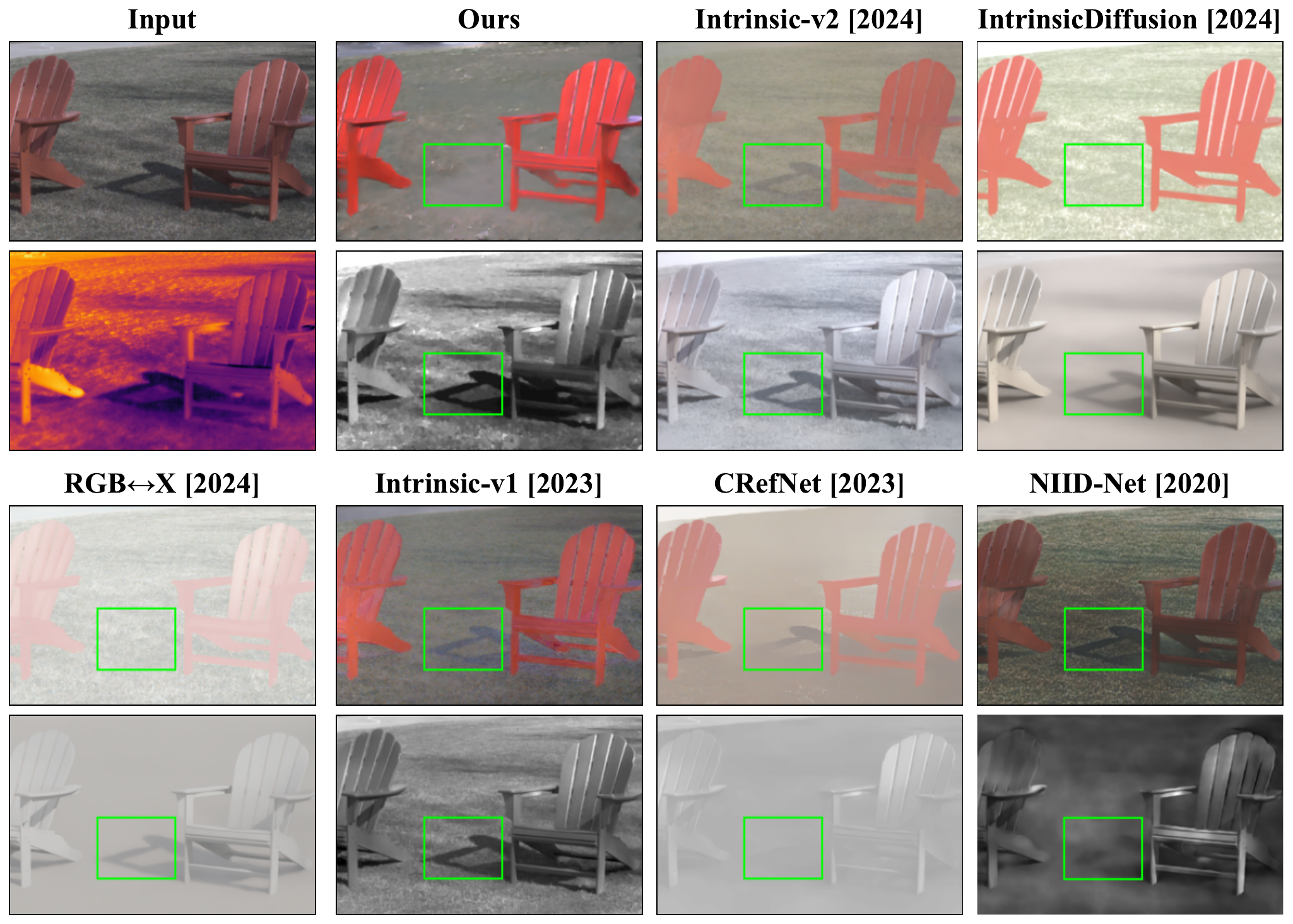}\\
    \vspace{-0.10in}
    \caption{
    Qualitative comparisons to state-of-the-art baselines.
    }
\end{figure*}

\begin{figure*}[t]
    \centering
    \includegraphics[width=1\linewidth]{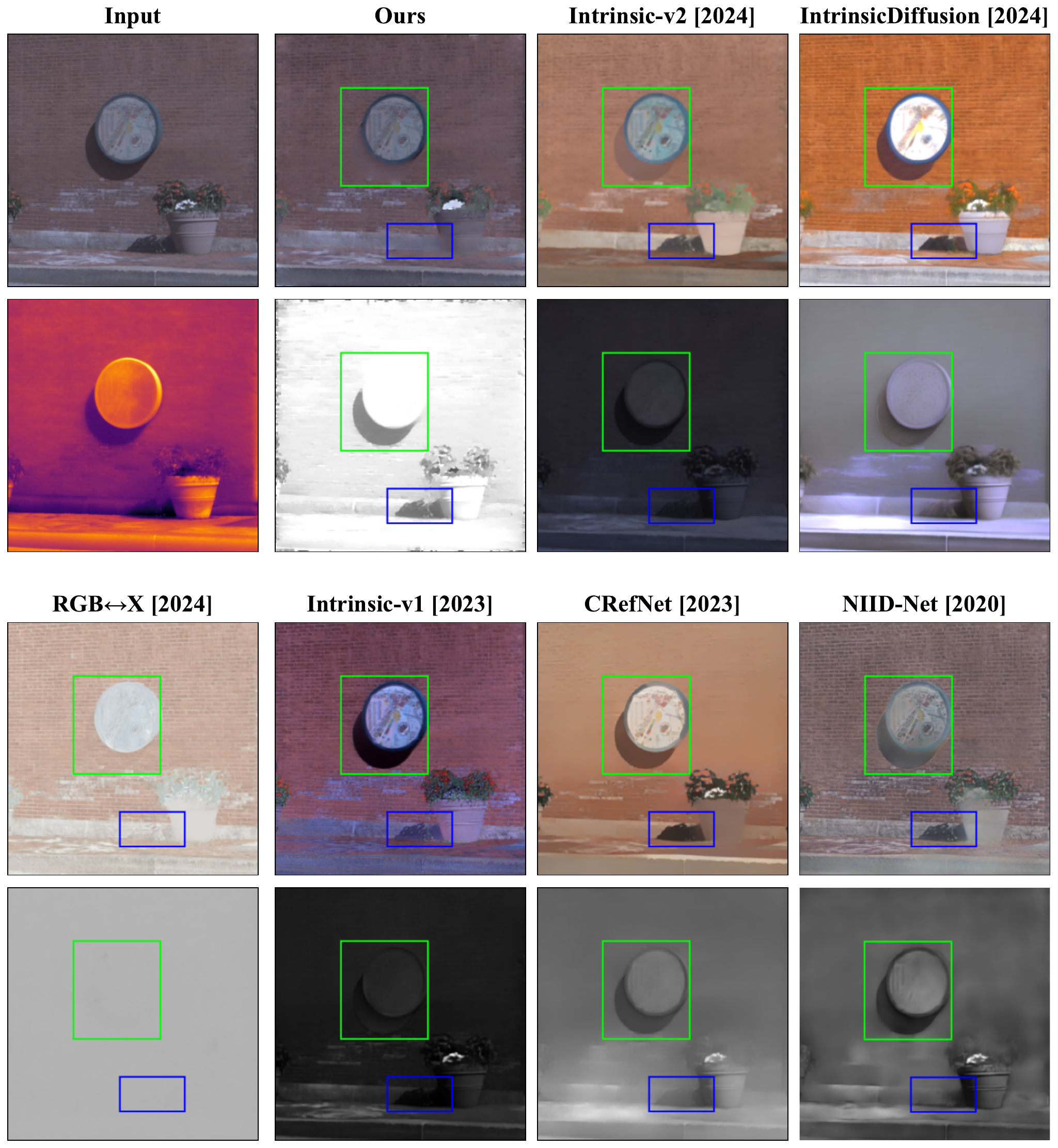}\\
    \vspace{-0.10in}
    \caption{
    Qualitative comparisons to state-of-the-art baselines.
    }
\end{figure*}

\begin{figure*}[t]
    \centering
    \includegraphics[width=1\linewidth]{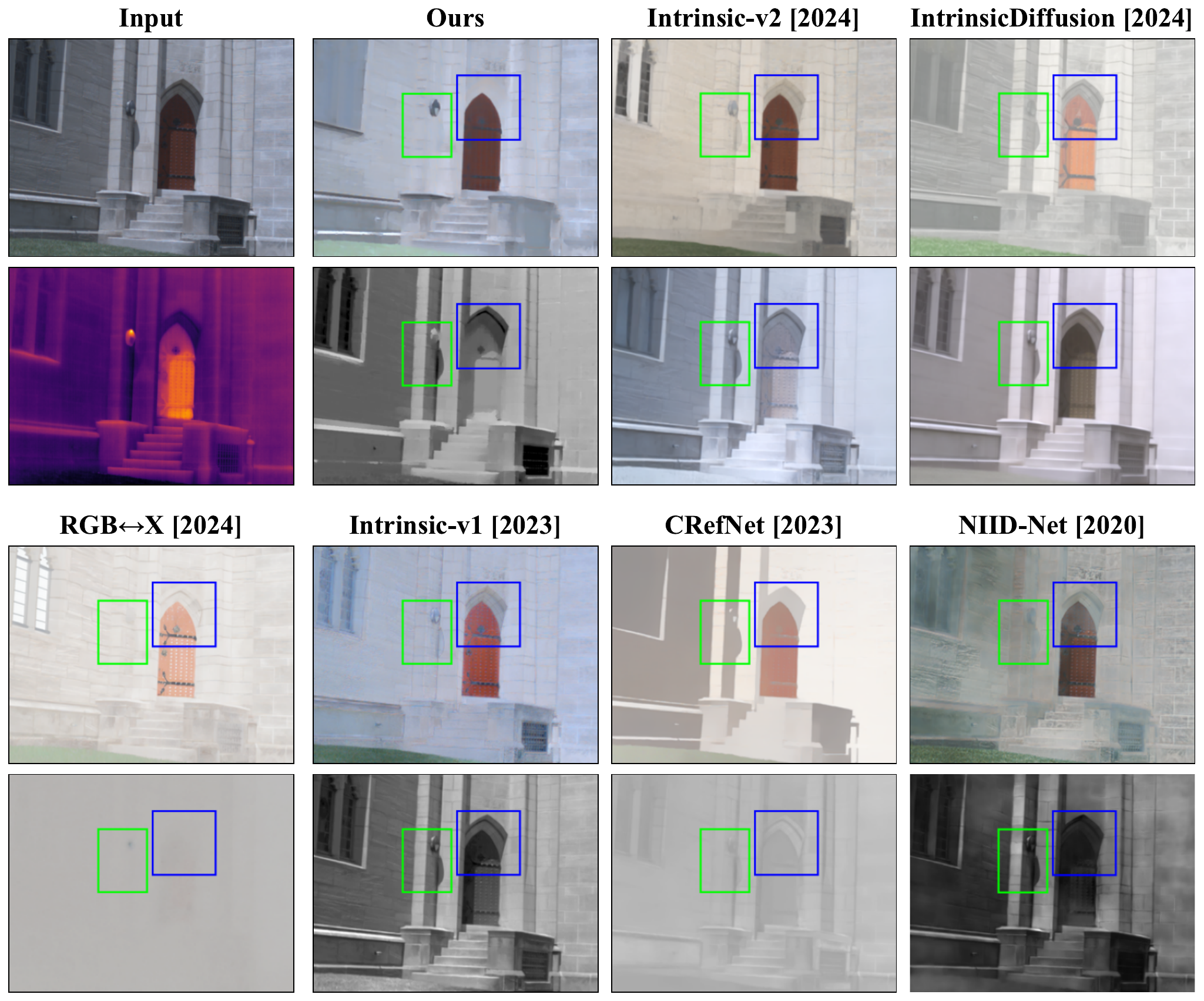}\\
    \vspace{-0.10in}
    \caption{
    Qualitative comparisons to state-of-the-art baselines.
    }
\end{figure*}

\begin{figure*}[t]
    \centering
    \includegraphics[width=1\linewidth]{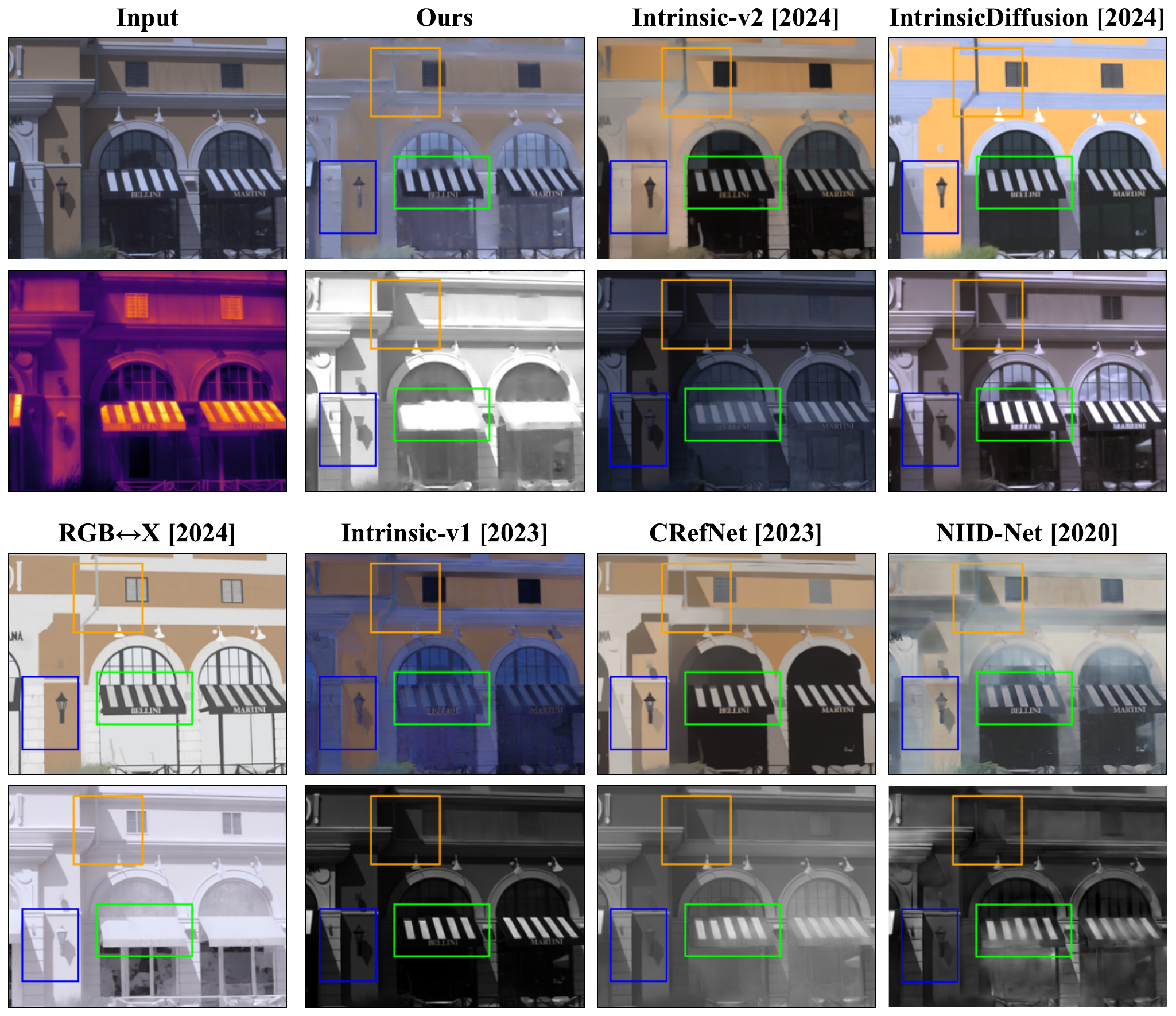}\\
    \vspace{-0.10in}
    \caption{
    Qualitative comparisons to state-of-the-art baselines.
    }
\end{figure*}

\begin{figure*}[t]
    \centering
    \includegraphics[width=1\linewidth]{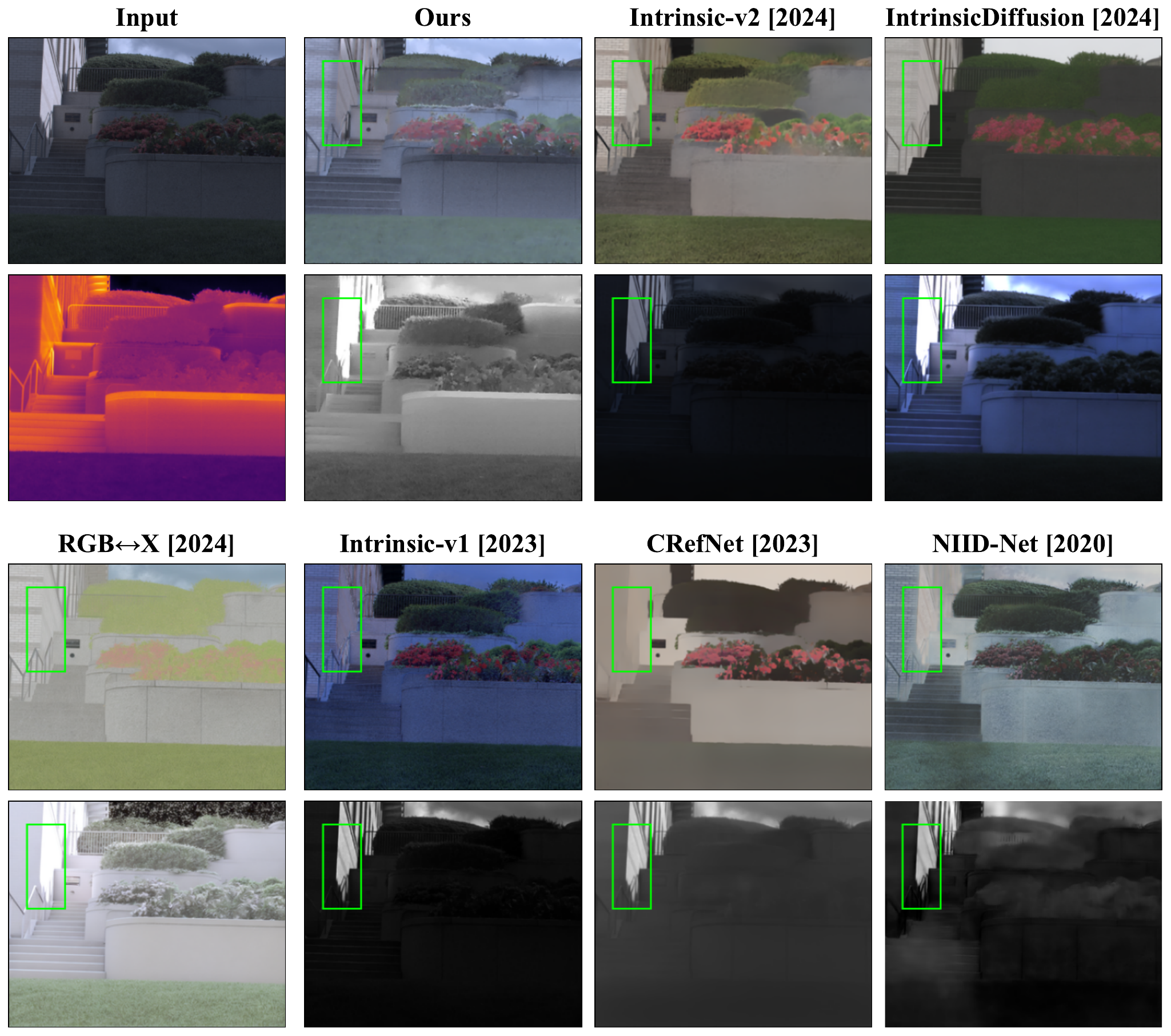}\\
    \vspace{-0.10in}
    \caption{
    Qualitative comparisons to state-of-the-art baselines.
    }
\end{figure*}

\begin{figure*}[t]
    \centering
    \includegraphics[width=0.7\linewidth]{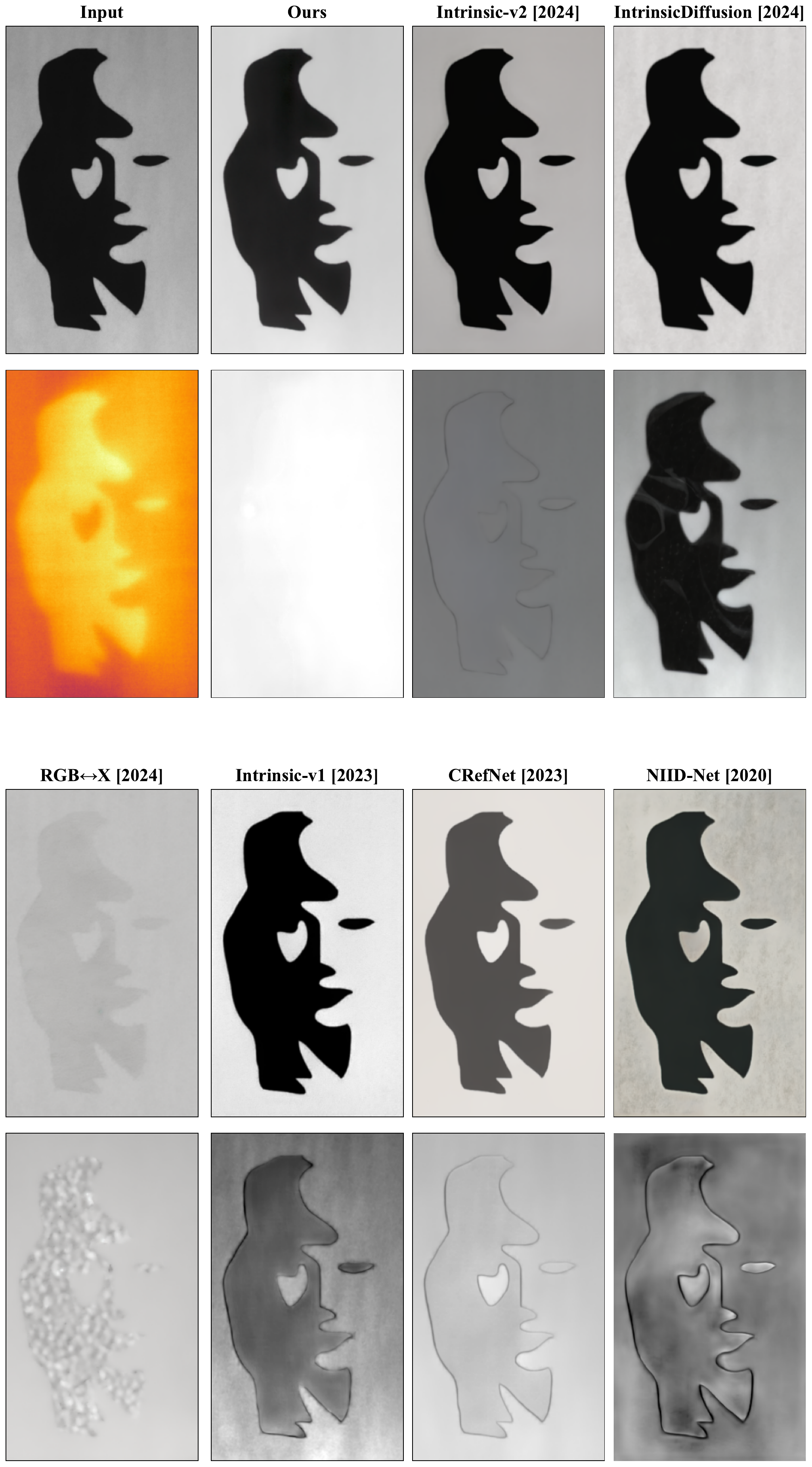}\\
    \vspace{-0.10in}
    \caption{
    Qualitative comparisons to state-of-the-art baselines on a printed image.
    }
\end{figure*}

\begin{figure*}[t]
    \centering
    \includegraphics[width=0.7\linewidth]{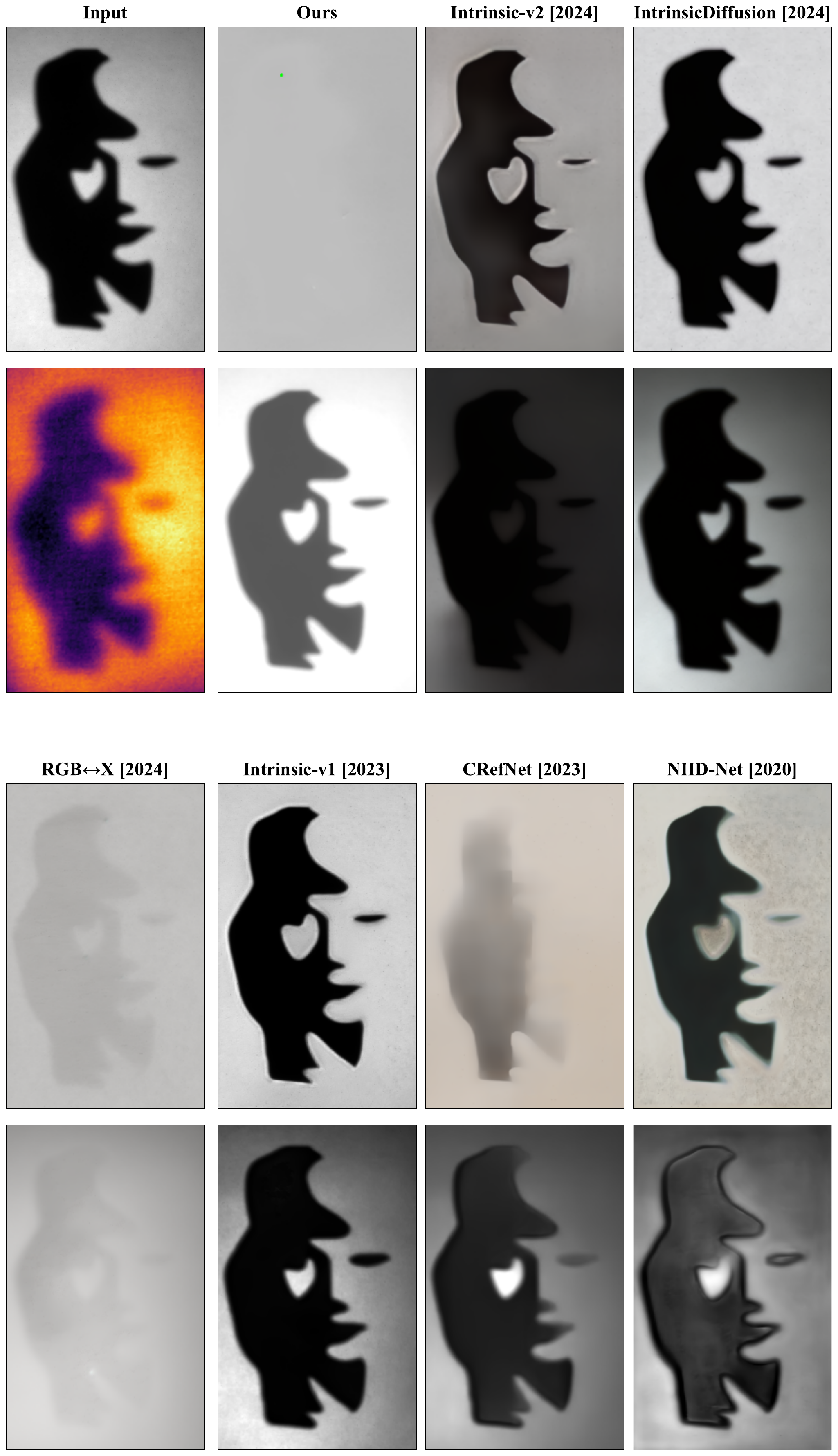}\\
    \vspace{-0.10in}
    \caption{
    Qualitative comparisons to state-of-the-art baselines on a projected image.
    }
\end{figure*}



\end{document}